\title{Hierarchically Regularized Deep\\Forecasting}
\author{%
  Biswajit Paria\thanks{Part of this work was done while BP was an intern at Google.}\\
  Carnegie Mellon University\\
  {\asciifamily\small bparia@cs.cmu.edu}
  \And
  Rajat Sen\quad Amr Ahmed\quad Abhimanyu Das\\
  Google Research\\
  {\asciifamily\small\{senrajat, amra, abhidas\}@google.com}
}
\newcommand\numberthis{\addtocounter{equation}{1}\tag{\theequation}}
\begin{document}

\maketitle

\begin{abstract}

Hierarchical forecasting is a key problem in many practical multivariate forecasting applications - the goal is to simultaneously predict a large number of correlated time series that are arranged in a pre-specified aggregation hierarchy. The main challenge is to exploit the hierarchical correlations to simultaneously obtain good prediction accuracy for time series at different levels of the hierarchy. In this paper, we propose a new approach for hierarchical forecasting which consists of two components. First, decomposing the time series along a global set of basis time series and modeling hierarchical constraints using the coefficients of the basis decomposition. And second, using a linear autoregressive model with coefficients that vary with time. Unlike past methods, our approach is scalable (inference for a specific time series only needs access to its own history) while also modeling the hierarchical structure via (approximate) coherence constraints among the time series forecasts. We experiment on several public datasets and demonstrate significantly improved overall performance on forecasts at different levels of the hierarchy, compared to existing state-of-the-art hierarchical models.
\end{abstract}

\section{Introduction}
Multivariate time series forecasting is a key problem in many domains such as retail demand forecasting \citep{bose2017probabilistic}, financial predictions \citep{zhou2020domain}, power grid optimization \citep{hyndman2009density}, road traffic modeling \citep{li2017diffusion}, and online ads optimization \citep{cui2011bid}. In many of these setting, the problem involves simultaneously forecasting a large number of possibly correlated time series for various downstream applications. In the retail domain, the time series may capture sales of items in a product inventory, and in  power grids, the time series may correspond to energy consumption in a household. Often, these time series are arranged in a natural multi-level hierarchy - for example in retail forecasting, items are grouped into subcategories and categories, and arranged in a product taxonomy. In the case of power consumption forecasting, individual households are grouped into neighborhoods, counties, and cities.
The hierarchical structure among the time series can usually be represented as a tree, with the leaf nodes corresponding to time series at the finest granularity, and the edges representing parent-child relationships. Figure~\ref{fig:hierarchy} illustrates a typical hierarchy in the retail forecasting domain for time series of product sales.

\begin{wrapfigure}[12]{r}{0.35\textwidth}
    \centering
    \includegraphics[trim={0, 40pt, 0, 30pt}, clip, width=0.35\textwidth]{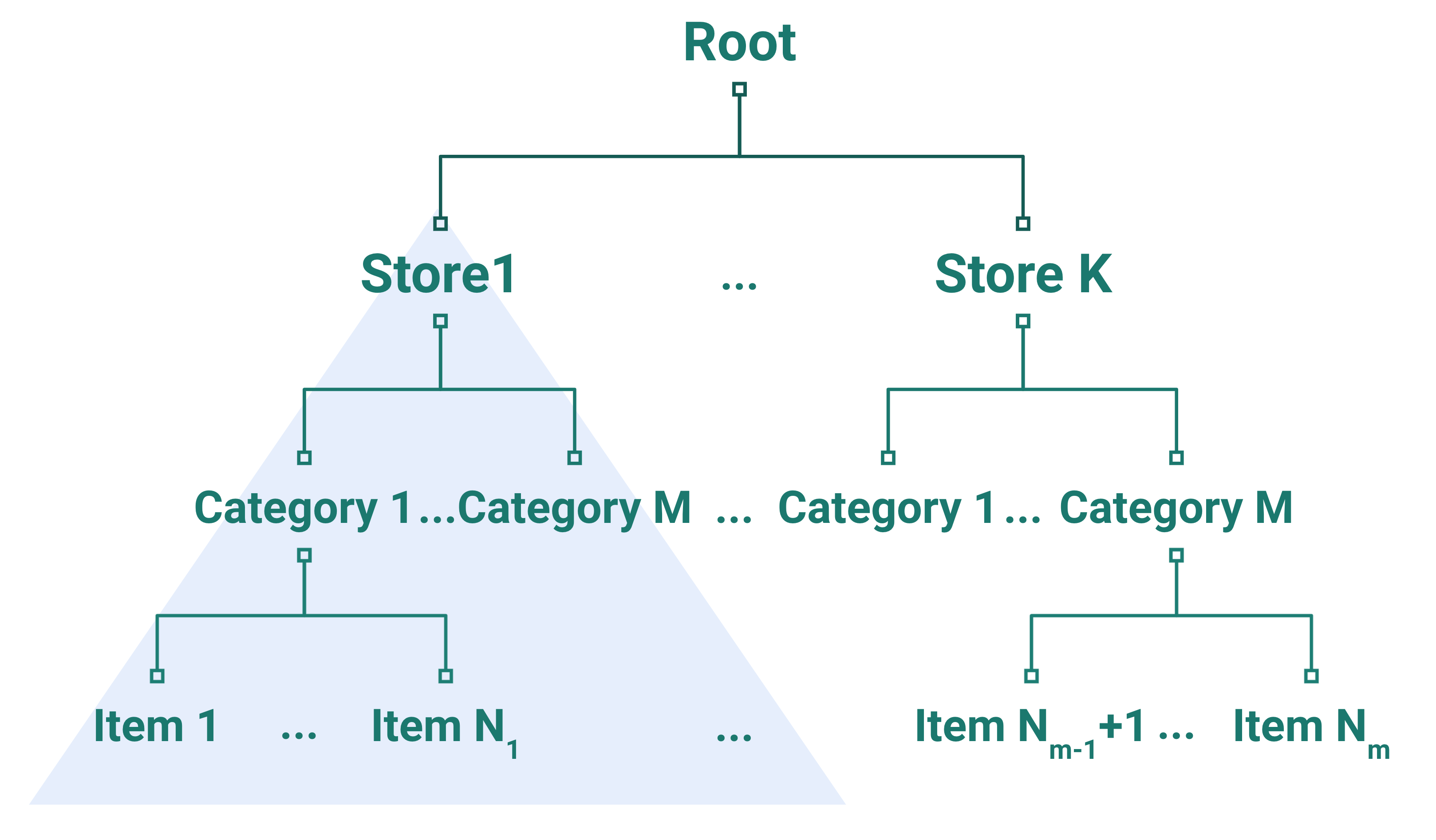}
    \caption{An example hierarchy for retail demand forecasting. The blue triangle represents the sub-tree rooted at the node \emph{Store1} with leaves denoted by \emph{Item i}.}
    \label{fig:hierarchy}
\end{wrapfigure}
In such settings, it is often required to obtain good forecasts, not just for the leaf level time-series (fine grained forecasts), but also for the aggregated time-series corresponding to higher level nodes (coarse gained forecasts). Furthermore, for interpretability and business decision making purposes, it is often desirable to obtain predictions that are roughly \emph{coherent} or \emph{consistent} \citep{thomson2019combining} with respect to the hierarchy tree. This means that the predictions for each parent time-series is equal to the sum of the predictions for its children time-series. More importantly, incorporating coherence constraints in a hierarchical forecasting model captures the natural inductive bias in most hierarchical datasets, where the ground truth parent and children time series indeed adhere to additive constraints. For example, total sales of a product category is equal to the sum of sales of all items in that category.

Some standard approaches for hierarchical forecasting include bottom-up aggregation, or reconciliation-based approaches. Bottom-Up aggregation involves training a model to obtain predictions for the leaf nodes, and then aggregate up along the hierarchy tree to obtain predictions for higher-level nodes. Reconciliation methods~\citep{ben2019regularized, hyndman2016fast,wickramasuriya2015forecasting,wickramasuriya2020optimal} make use of a trained model to obtain predictions for all nodes of the tree, and then, in a separate post-processing phase,  \emph{reconcile} (or modify) them using various optimization formulations to obtain coherent predictions. Both of these approaches suffer from shortcomings in term of either aggregating noise as one moves to higher level predictions (bottom-up aggregation), or not jointly optimizing the forecasting predictions along with the coherence constraints (for instance, reconciliation).

At the same time, there have been several recent advances on using Deep Neural Network models for multivariate forecasting, including Recurrent Neural Network (RNN) and Convolutional Neural Network (CNN) architectures \citep{salinas2020deepar, sen2019think, wang2019deep, oreshkin2019n, rangapuram2018deep}, that have been shown to outperform classical time-series models such as autoregressive and exponential smoothing models~\citep{mckenzie1984general,hyndman2008forecasting}, especially for large datasets. However, most of these approaches do not explicitly address the question of how to model the hierarchical relationships in the dataset. Deep forecasting models based on Graph Neural Networks (GNN) \citep{bai2020adaptive,cao2021spectral,yu2017spatio,li2017diffusion,wu2020connecting} do offer a general framework for learning on graph-structured data. However it is well known~\citep{bojchevski2020scaling} that GNNs are hard to scale for learning on graphs with a very large number of nodes, which in real-world settings such as retail forecasting, could involve hundreds of thousands of time series. More importantly, a desirable practical feature for multi-variate forecasting models is to let the prediction of future values for a particular time series only require as input historical data from that time series (along with covariates), without requiring access to historical data from all other time series in the hierarchy. This allows for scalable training and inference of such models using mini-batch gradient descent, without requiring each batch to contain all the time series in the hierarchy. This is often not possible for GNN-based forecasting models, which require batch sizes of the order of the number of time series.

\textbf{Problem Statement:} Based on the above motivations, our goal is to design a hierarchical forecasting model with the following requirements: 1) The model can be trained using a single-stage pipeline on all the time series data, without any separate post-processing, 2) The model captures the additive coherence constraints along the edges of the hierarchy, 3) The model is efficiently trainable on large datasets, without requiring, for instance, batch sizes that scale with the number of time series.

In this paper, we propose a principled methodology to address all these above requirements for hierarchical forecasting.
Our model comprises of two components, both of which can support coherence constraints. The first component is purely a function of the historical values of a time series, without distinguishing between the individual time series themselves in any other way. Coherence constraints on such a model correspond to imposing an additivity property on the prediction function - which constrains the model to be a linear autoregressive model. However, crucially, our model uses time-varying  autoregressive coefficients that can themselves be nonlinear functions of the timestamp and other global features (linear versions of time-varying AR have been historically used to deal with non-stationary signals~\citep{sharman1984time}). We will refer to this component as the \emph{time-varying autoregressive model}.

The second component focuses on modeling the global temporal patterns in the dataset through identifying a small set of temporal \emph{global basis functions}. The basis time-series, when combined in different ways, can express the individual dynamics of each time series. In our model, the basis time-series  are encoded in a trained seq-2-seq model~\citep{sutskever2014sequence} model in a functional form. Each time series is then associated with a learned embedding vector that specifies the weights for decomposition along these basis functions. Predicting a time series into the future using this model then just involves extrapolating the global basis functions and combining them using its weight vector, without explicitly using the past values of that time series. The coherence constraints therefore only impose constraints on the embedding vectors of each time series, which can be easily modeled by a hierarchical regularization function. We call this component a \emph{ basis decomposition model}. 
In Section \ref{sec:main_thm}, we also provide theoretical justification for how such hierarchical regularization using basis decomposition results in improved prediction accuracy. 

We experimentally evaluate our model on multiple publicly available hierarchical forecasting datasets. We compare our approach to state-of-the-art (non-hierarchical) deep forecasting models, GNN-based models and reconciliation models, and show that our approach can obtain consistently more accurate predictions at all levels of the hierarchy tree. 

\section{Related Work on Deep Hierarchical Models}
In addition to the works referenced in the previous section, we now discuss a few papers that are more relevant to our approach. 
Specifically, we discuss some recent deep hierarchical forecasting methods that do not require a post-processing reconciliation step. 
 \citet{mishchenko2019self} impose coherency on a base model via $\ell_2$ regularization on the predictions. \citet{gleason2020forecasting} extend the idea further to impose the hierarchy on an embedding space rather than the predictions directly. SHARQ \citep{han2021simultaneously} follows a similar $\ell_2$ regularization based approach as \citet{mishchenko2019self}, and also extends the idea to probabilistic forecasting. Their model is trained separately for each of the hierarchical levels starting from the leaf level, thus requiring a separate prediction model for each level. Another approach, Hier-E2E \citep{rangapuram2021end} produces perfectly coherent forecasts by using a projection operation on base predictions from a DeepVAR model \citep{salinas2019high}. It requires the whole hierarchy of time series to be fed as input to the model leading to a large number of parameters, and hence does not scale well to large hierarchies. \citet{yanchenko2021hierarchical} take a fully Bayesian approach by modelling the hierarchy using conditional distributions.

\section{Problem Setting}
\label{sec:problem_setup}
We are given a set of $N$ coherent time series of length $T$, arranged in a pre-defined hierarchy consisting of $N$ nodes. At time step $t$, the time series data can be represented as a vector $\yv_t \in \Rb^N$ denoting the time series values of all $N$ nodes. We compactly denote the set of time series for all $T$ steps as a matrix $\Yv = \sbb{\yv_1, \cdots, \yv_T}^\top \in \Rb^{T \times N}$. Also define $\yv\idx{i}$ as the $i$th column of the matrix $\Yv$ denoting all time steps of the $i$ th time series, and $\yv_t\idx{i}$ as the $t$ th value of the $i$ th time series. We compactly denote the $H$-step history of $\Yv$ by $\Yv_\Hc = [\yv_{t-H}, \cdots, \yv_{t-1}]^\top \in \Rb^{H \times N}$ and the $H$-step history of $\yv\idx{i}$ by $\yv_\Hc\idx{i} = [\yv_{t-H}\idx{i}, \cdots, \yv_{t-1}\idx{i}] \in \Rb^H$. Similarly define the $F$-step future of $\Yv$ as $\Yv_\Fc = [\yv_t, \cdots, \yv_{t+F-1}]^\top \in \Rb^{F \times N}$. We use the $\widehat{\cdot}$ notation to denote predicted values, for example $\widehat{\Yv}_\Fc,\ \widehat{\yv}_\Fc$ and $\widehat{\yv}_t$.

Time series forecasts can often be improved by using features as input to the model along with historical time series. The features often evolve with time, for example, categorical features such as \emph{type of holiday}, or continuous features such as \emph{time of the day}. We denote the matrix of such features by $\Xv \in \Rb^{T\times D}$, where the $t$ th row denotes the $D$-dimensional feature vector at the $t$ time step. For simplicity, we assume that the features are \emph{global}, meaning that they are shared across all time series. We similarly define $\Xv_\Hc$ and $\Xv_\Fc$ as above.

\paragraph{Hierarchically Coherent Time Series.}
We assume that the time series data are coherent, that is, they satisfy the \emph{sum constraints} over the hierarchy. The time series at each node of the hierarchy is the equal to the sum of the time series of its children, or equivalently, equal to the sum of the leaf time series of the sub-tree rooted at that node. Figure~\ref{fig:hierarchy} shows an example of a sub-tree rooted at a node.

As a result of aggregation, the data can have widely varying scales with the values at higher level nodes being magnitudes higher than the leaf level nodes. It is well known that neural network training is more efficient if the data are similarly scaled. Hence, in this paper, we work with rescaled time series data. The time series at each node is downscaled by the number of leaves in the sub-tree rooted at the node, so that now they satisfy \emph{mean constraints} rather than sum constraints described above. Denote by $\Lc(p)$, the set of leaf nodes of the sub-tree rooted at $p$. Hierarchically coherent data satisfy the following \emph{data mean property},
\begin{equation}
\label{eqn:data_mean_property}
    \yv\idx{p} = \frac{1}{|\Lc(p)|} \sum_{i \in \Lc(p)} \yv\idx{i} \quad \text{\emph{(Data Mean Property)}}.
\end{equation}

\section{Hierarchically Regularized Deep Forecasting - \textnormal{\textsc{HiReD}}}
\label{sec:method}

We now introduce the two components in our model, namely the \emph{time-varying AR model} and the \emph{basis decomposition model}. As mentioned in the introduction a combination of these two components satisfy the three requirements in our problem statement. In particular, we shall see that the coherence property plays a central role in both the components. For simplicity, in this section, all our equations will be for forecasting one step into the future ($F=1$), even though all the ideas trivially extend to multi-step forecasting. The defining equation of our model can be written as,
\begin{align*}
    \widehat{\yv}_{\Fc}^{(i)} &= f(\yv_\Hc\idx{i}, \Xv_\Hc, \Xv_\Fc, \Zv_\Hc, \theta_i) \\
    & = \underbrace{\big\langle \yv_\Hc\idx{i}, a(\Xv_\Hc, \Xv_\Fc, \Zv_\Hc) \big\rangle}_{\text{Time varying AR (TVAR)}}
    + \underbrace{\big\langle \theta_i, b(\Xv_\Hc, \Xv_\Fc, \Zv_\Hc) \big\rangle}_{\text{Basis decomposition (BD)}}. \numberthis \label{eqn:base_form}
\end{align*}

In the above equation, $\Zv_\Hc$ is a latent state vector that contains some summary temporal information about the whole dataset, and $\theta_i$ is the embedding/weight vector for time-series $i$ in the basis decomposition model. $\Zv_\Hc$ can be a relatively low-dimensional temporally evolving variable that represents some information about the global state of the dataset at a particular time. We use the \emph{Non-Negative Matrix Factorization} (NMF) algorithm by \citet{gillis2013fast} to select a small set of representative time series that encode the global state. If the indices of the selected representative time-series is denoted by $\{i_1, \cdots, i_R\}$ ($R$ denotes the \emph{rank} of the factorization), then we define $\Zv = [\Yv\idx{i_1}, \cdots, \Yv\idx{i_R}] \in \Rb^{T\times R}$. Note that we only feed the past values $\Zv_\Hc$ as input to the model, since future values are not available during forecasting. Also, note that the final basis time-series can be a non-linear function of $\Zv_\Hc$. In our experiments, $R$ is tuned but is always much much less than $N$. $a$ and $b$ are functions not dependent on $\yv_\Hc\idx{i}$ and $\theta_i$. We will provide more details as we delve into individual components.

{\bf Time-Varying AR (TVAR): } The first part of the expression in equation \ref{eqn:base_form} denoted by \emph{Time Varying AR (TVAR)} resembles a linear auto-regressive model with coefficients $a(\Xv_\Hc, \Xv_\Fc, \Zv_\Hc) \in \Rb^H$, that are a function of the input features, and thus can change with time. The AR parameters of this model are shared across all time series and hence 
do not encode any time-series specific information,
a drawback that is overcome by the Basis Decomposition part of our model. This component is \emph{coherent by design} because it is a shared linear AR model. However, even though the AR weights are shared across all the time-series at a given time-point, they can crucially change with time, thus lending more flexibility to the model. 

{\it Implementation:} In order to model the sequential nature of the data, we use an LSTM encoder to encode the past $\Xv_\Hc$ and $\Zv_\Hc$.  Then, we use a fully connected (FC) decoder for predicting the auto-regressive weights. Similar to \citet{wen2017multi}'s multi-horizon approach, we use a different head of the decoder for each future time step resulting in a $F$-headed decoder producing $F$-step predictions for TVAR weights. The decoder also takes as input the future covariates $\Xv_\Fc$ if available. The produced weights are then multiplied (inner product) to the history to produce the final TVAR predictions. We illustrate this architecture in Figure~\ref{fig:full_arch} (right).

{\bf Basis Decomposition (BD) with Hierarchical Regularization: } Now we come to the second part of our model in equation \ref{eqn:base_form}. As discussed before, this part of the model has per time-series adaptivity, as different time-series can have different embeddings. It resembles an expansion of the time series on a set of basis functions $b(\Xv_\Hc, \Xv_\Fc, \Zv_\Hc) \in \Rb^K$, with the basis weights/embedding for time series $i$ denoted by $\theta_i \in \Rb^K$. Both the basis functions and the time series specific weights are learned from the data, rather than fixing a specific form such as Fourier or Wavelet basis.

The idea of using a basis has also been recently invoked in the time-series literature \citep{sen2019think, wang2019deep}. The basis recovered in the implementation of \citet{wang2019deep} is allowed to vary for each individual time-series and therefore is not a true basis. \citet{sen2019think} do explicitly recover an approximate basis in the training set through low-rank matrix factorization regularized by a deep global predictive model alternatingly trained on the basis vectors, thus not amenable to end-to-end optimization. We shall see that our model can be trained in an end-to-end manner.

\emph{Embedding Regularization for Approximate Coherency: } The TVAR part of our model is coherent by design due to its linearity. The BD however requires the embeddings of the time-series to satisfy the mean property along the hierarchy:
\begin{equation}
    \theta_p = \frac{1}{|\Lc(p)|} \sum_{i \in \Lc(p)} \theta_i \quad \text{(\emph{Embedding Mean Property})},
    \label{eqn:emb_mean_property}
\end{equation}
for the forecasts to be coherent.
We impose this constraint approximately via an $\ell_2$ regularization on the embedding,
\begin{equation}
    E_\reg(\thetav) = \sum_{p=1}^N \sum_{i \in \Lc(p)} \| \theta_p - \theta_i\|_2^2.
\end{equation}
The purpose of this regularizer is two fold. Firstly, we observe that, when the leaf embeddings are kept fixed, the regularizer is minimized when the embeddings satisfy the mean property \eqref{eqn:emb_mean_property}, thus encouraging coherency in the predictions. Secondly, it also encodes the inductive bias present in the data corresponding to the hierarchical additive constraints. We provide some theoretical justification for this hierarchical regularization in Section~\ref{sec:theory}.

{\it Implementation:} As before, we use an LSTM encoder to encode the past $\Xv_\Hc$ and $\Zv_\Hc$.  Then, we use the encoding from the encoder along with the future features $\Xv_\Fc$ (sequential in nature) and pass them through an LSTM decoder to yield the $F$-step basis predictions which are then multiplied with the embedding (inner product) to produce the final BD predictions. We illustrate this architecture in the top right of Figure~\ref{fig:full_arch}. Thus, a functional representation of the basis time-series is implicitly maintained within the trained weights of the basis generating seq-2-seq model. Note that the embeddings are also trained in our end-to-end model. We illustrate this architecture in Figure~\ref{fig:full_arch} (left).

We emphasize that the main ideas in our model are agnostic to the specific type of neural network architecture used. For our experiments, we specifically use an LSTM architecture \citep{hochreiter1997long} for the encoder and decoder. Other types of architectures including transformers \citep{vaswani2017attention} and temporal convolution networks \citep{borovykh2017conditional} can also be used.

{\bf Loss Function: } During training, we minimize the mean absolute error (MAE) of the predictions along with the embedding regularization term introduced above (our method generalizes to other losses too, such as mean square error, or mean absolute percentage error). For regularization weight $\lambda_E$, and $\widehat{\yv}\idx{i}_{\Fc}$ defined as Eq \eqref{eqn:base_form}, and $\Theta$ denoting the trainable parameters of $a, b$, our training loss function is,
\begin{equation}
    \label{eqn:train_loss}
    \ell(\Theta, \thetav) =
        \underbrace{\sum_{i} \sum_{\Fc} |\yv_{\Fc}\idx{i} - \widehat{\yv}_{\Fc}\idx{i}|}_{\text{Prediction loss}}
        + \underbrace{\lambda_E E_\reg(\thetav)}_{\text{Embedding regularization}}.
\end{equation}
Note that the time-series dependent part of the loss function can be easily mini-batched and the embeddings are not memory intensive.

\begin{figure}
    \centering
    \includegraphics[height=0.206\textwidth]{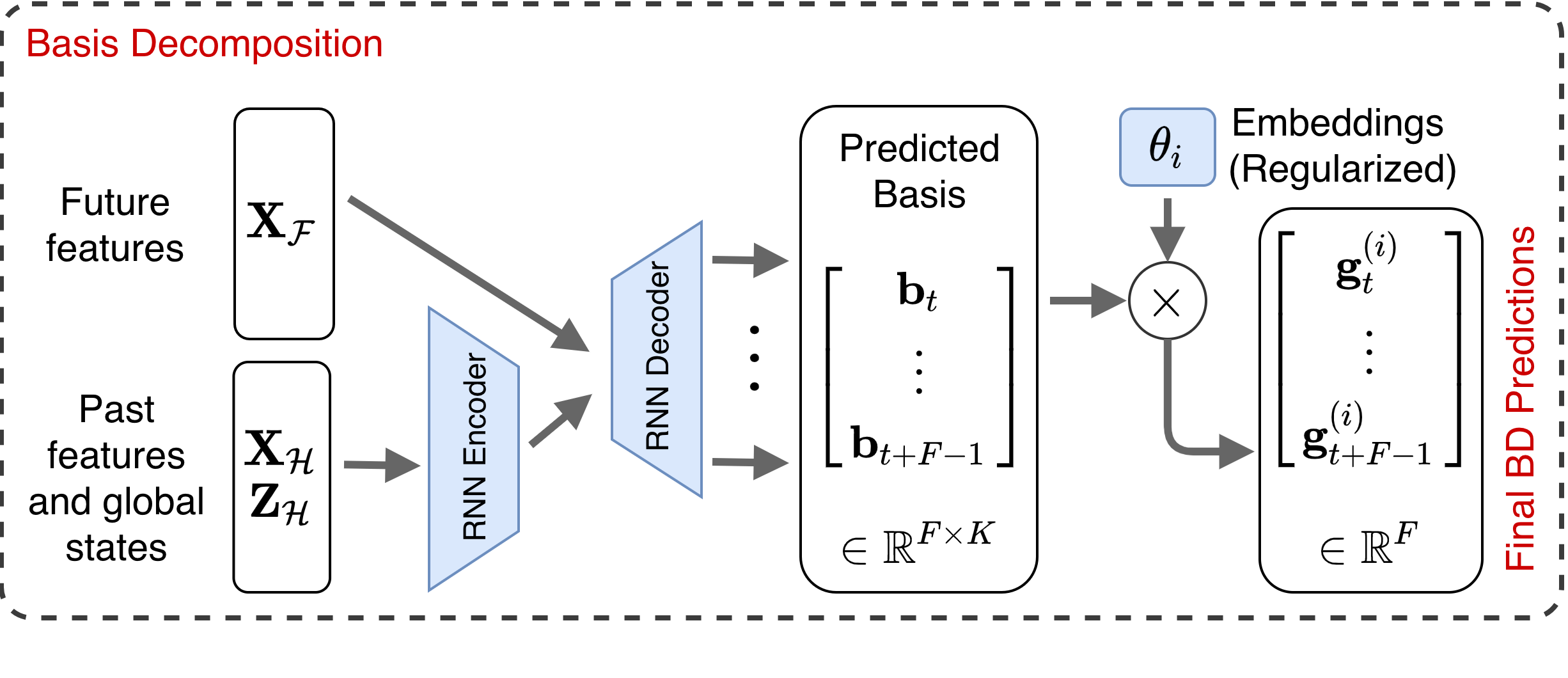}\hfill
    \includegraphics[trim={0, 20pt, 0, 0},clip,height=0.206\textwidth]{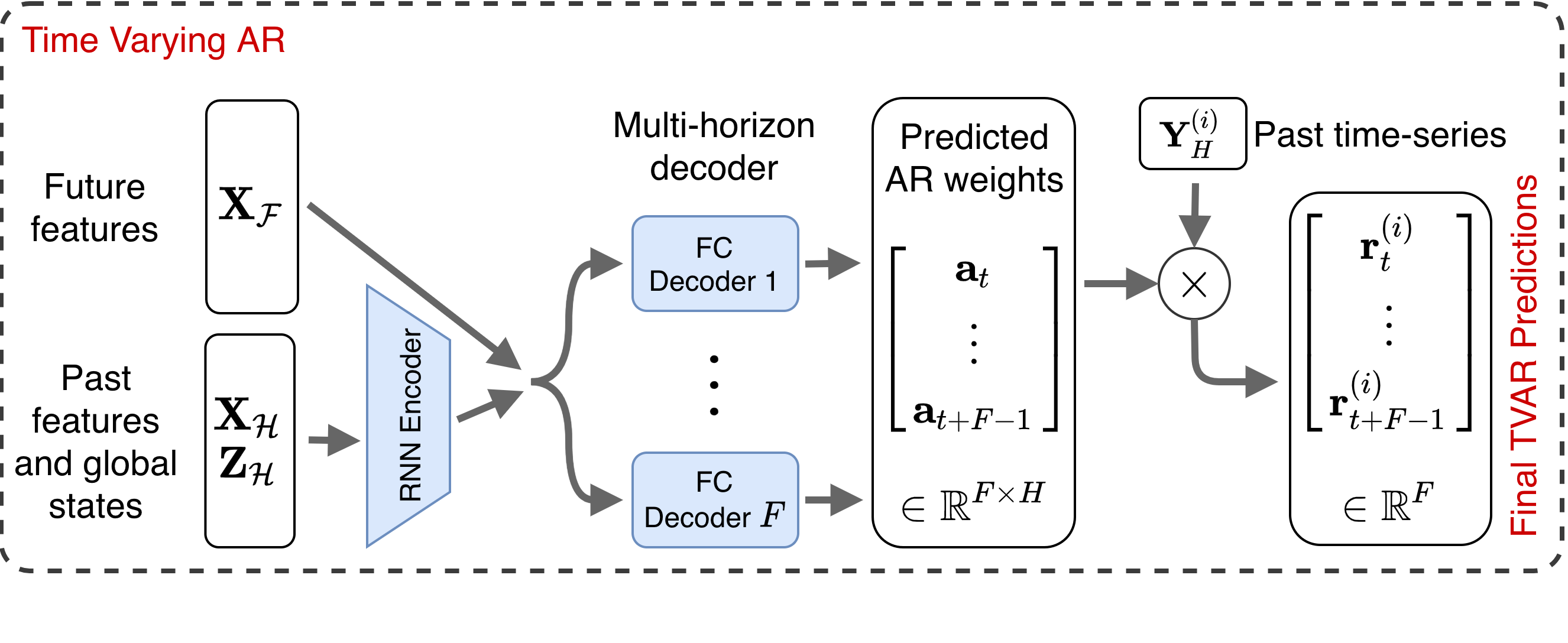}
    \caption{In this figure we show the architectures of our two model components separately. On the left we show the BD model, where the seq-2-seq model implicitly maintains the basis in a functional form. Note that the time-series specific weights $\{\theta_i\}$ are also trained. On the right, we show the TVAR model. The fully connected decoder has a different prediction head for each future time-point.}
    \label{fig:full_arch}
\end{figure}

\section{Theoretical Justification for Hierarchical Modeling}
\label{sec:theory}

In this section, we theoretically analyze the benefits of modeling hierarchical constraints in a much simplified setting, and show how it can result in provably improved accuracy, under some assumptions. Since analyzing our actual deep non-linear model for an arbitrary hierarchical set of time series can be complex, we make some simplifying assumptions to the problem and  model. We assume that all the time series in the dataset is a linear combination of a small set of basis time series. That is, $\Yv = \Bv\thetav + \wv$, where $\Bv \in \Rb^{T \times K}$ denotes the set of basis vectors, $\thetav = [\theta_1, \cdots, \theta_N] \in \Rb^{K \times N}$ denotes the set of weight vectors used in the linear combination for each time series, and $\wv \in \Rb^{T \times N}$ denotes the noise matrix sampled i.i.d as $w \sim \Nc(0, \sigma^2)$ for the leaf nodes. A classical example of such a basis set can be a small subset of Fourier or Wavelet basis \citep{strang1993wavelet,vanwavenet} that is relevant to the dataset. Note that we ignore the TVAR model for the sake of analysis and focus mainly on the BD model which includes the hierarchical regularization.

In this section, we consider a 2-level hierarchy of time-series, consisting of a single root node (indexed by $0$) with $L$ children (denoted by $\Lc(0)$). We will also assume that instead of learning the $K$ basis vectors $\Bv$ from scratch, the $K$ basis vectors are assumed to come from a much larger dictionary $\bar{\Bv}\in \Rb^{T \times D}$ of $D\ (\gg K)$  vectors that is fixed and known to the model. While the original problem learns the basis and the coefficients $\thetav$ simultaneously, in this case the goal is to select the basis from among a larger dictionary, and learn the coefficients $\thetav$ .

We analyze this problem, and show that under the reasonable assumption of the parent embedding $\theta_0$ being close to all the children embeddings $\theta_n$, using the hierarchical constraints can result in a mean-square error at the leaf nodes that is a multiplicative factor $L$ smaller than the optimal mean-square error of any model that does not use the hierarchical constraints.  Our proposed \textsc{HiReD} model, when applied in this setting would result in the following (hierarchically) regularized regression problem:
\begin{equation}
    \min_{\thetav} \frac{1}{NT} \|\yv - \Bv\thetav\|_2^2 + \lambda \sum_{n \in\Lc(0)} \|\theta_0 - \theta_n\|_2^2.
\end{equation}
For the sake of analysis, we instead consider a two-stage version, described in Algorithm~\ref{alg:support_rec} and  Algorithm~\ref{alg:param_rec}: we first recover the support of the basis using Basis Pursuit~\citep{chen2001atomic}. We then estimate the parameters of the root node, which is then plugged-in to solve for the parameters of the children node. We also define the baseline (unregularized) optimization problem for the leaf nodes that does not use any hierarchical information, as
\begin{equation}
\label{eqn:unreg_child}
    \tilde{\theta}_n = \argmin_{\theta_n} \frac{1}{T}\|y_n - \Bv\theta_n\|_2^2 \quad \forall n \in \Lc(0).
\end{equation}
The basis support recovery follows from standard analysis~\citep{wainwright2009sharp} detailed in Lemma~\ref{lem:support_recovery} in the Appendix. We focus on the performance of Algorithm~\ref{alg:param_rec} here. The following theorem bounds the error of the unregularized ($\tilde{\theta}_n$) and the hierarchically-regularized ($\widehat{\theta}_n$, see Algorithm \ref{alg:param_rec}) optimization solutions. A proof of the theorem can be found in Appendix~\ref{sec:main_thm}.

\begin{figure}
\centering
\begin{minipage}[t]{0.40\textwidth}
\begin{algorithm}[H]
    \DontPrintSemicolon
    \caption{Basis Recovery}
    \label{alg:support_rec}
    \KwIn{Observed $\yv$, basis dict $\bar{\Bv}$,\\regularization parameter $\lambda_L$}
    \KwOut{Estimated basis $\Bv$}
    $\widehat{\alpha}_0 \gets \underset{\alpha\in \Rb^n}{\argmin} \frac{1}{2T}\|y_0-\bar{\Bv}\alpha\|_2^2 + \lambda_L\|\alpha\|_1$\;
    Estimate support $\widehat{S} = \{i\ |\ |\widehat{\alpha}_0| > 0\}$\;
    Estimate true basis $\Bv \gets \bar{\Bv}_{\widehat{S}}$
\end{algorithm}
\end{minipage}\hfill
\begin{minipage}[t]{0.56\textwidth}
\begin{algorithm}[H]
    \DontPrintSemicolon
    \caption{Parameter Recovery}
    \label{alg:param_rec}
    \KwIn{Observed time series $\yv$, estimated basis $\Bv$,\\regularization parameter $\lambda_E$}
    \KwOut{Estimated parameters $\thetav$}
    $\widehat{\theta}_0 \gets \argmin_{\theta_0} \frac{1}{T}\|y_0 - \Bv\theta_0\|_2^2$\;
    \For{$n \in \Lc(0)$}{
        $\widehat{\theta}_n \gets \argmin_{\theta_n} \frac{1}{T} \|y_n - \Bv\theta_n\|_2^2 + \lambda_E \|\widehat{\theta}_0 - \theta_n\|_2^2.$
    }
\end{algorithm}
\end{minipage}
\end{figure}

\begin{theorem}
\label{thm:main_thm}
Suppose the rows of $\Bv$ are norm bounded as $\|\Bv_i\|_2 \le r$, and $\|\theta_n - \theta_0\|_2 \le \beta$. Define $\Sigma = \Bv^T\Bv/T$ as the empirical covariance matrix. For $\lambda_E = \frac{\sigma^2K}{T\beta^2}$, $\widetilde{\theta}_n$ and $\widehat{\theta}_n$ can be bounded as,
\begin{equation}
\label{eqn:main_thm}
    \Eb\|\widetilde{\theta}_n - \theta_n\|^2_\Sigma \le \frac{\sigma^2K}{T}, \quad\Eb\|\widehat{\theta}_n - \theta_n\|^2_\Sigma \le 3\frac{\sigma^2K}{T}\frac{1}{1 + \frac{\sigma^2 K}{T r^2 \beta^2}} + 6\frac{\sigma^2K}{TL}.
\end{equation}
\end{theorem}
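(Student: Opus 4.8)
The plan is to carry the whole argument through in closed form. Once Lemma~\ref{lem:support_recovery} guarantees that Algorithm~\ref{alg:support_rec} recovers the true support, so that $\Bv\in\Rb^{T\times K}$ is the correct (invertible) basis matrix, both subproblems in Algorithm~\ref{alg:param_rec} are ordinary (regularized) least squares with explicit solutions. The one structural fact that drives everything is coherence: since the data satisfy the mean property, $\theta_0=\frac1L\sum_{n\in\Lc(0)}\theta_n$ and $w_0=\frac1L\sum_{n\in\Lc(0)}w_n$, so every coordinate of the root noise has variance $\sigma^2/L$ rather than $\sigma^2$ — this is what produces the $1/L$ gain. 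I would also record from $\|\Bv_i\|_2\le r$ that $\mathrm{tr}(\Sigma)\le r^2$, hence $\Sigma\preceq r^2I$.

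The unregularized bound is immediate: $\widetilde\theta_n-\theta_n=(\Bv^\top\Bv)^{-1}\Bv^\top w_n$, so $\|\widetilde\theta_n-\theta_n\|_\Sigma^2=\tfrac1T w_n^\top\Bv(\Bv^\top\Bv)^{-1}\Bv^\top w_n$, and taking expectations with $\mathrm{tr}(\Bv(\Bv^\top\Bv)^{-1}\Bv^\top)=K$ gives $\Eb\|\widetilde\theta_n-\theta_n\|_\Sigma^2=\sigma^2K/T$, the first claim.

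For the regularized estimator, write $A=(\Sigma+\lambda_EI)^{-1}$, $g_n=\tfrac1T\Bv^\top w_n$, and $\xi_0=\widehat\theta_0-\theta_0=(\Bv^\top\Bv)^{-1}\Bv^\top w_0$. The first line of Algorithm~\ref{alg:param_rec} is OLS on the root, and the stationarity condition of the second line gives $\widehat\theta_n=A(\Sigma\theta_n+g_n+\lambda_E\widehat\theta_0)$; subtracting $\theta_n$ and using $A(\Sigma+\lambda_EI)=I$ yields the decomposition
\[
\widehat\theta_n-\theta_n \;=\; \underbrace{A g_n}_{\text{own-sample noise}} \;+\; \underbrace{\lambda_E A(\theta_0-\theta_n)}_{\text{shrinkage bias}} \;+\; \underbrace{\lambda_E A\,\xi_0}_{\text{parent-estimate noise}} .
\]
I would bound the three pieces in the $\Sigma$-seminorm by trace computations. (i) $\Eb\|Ag_n\|_\Sigma^2=\tfrac{\sigma^2}{T}\mathrm{tr}((A\Sigma)^2)\le\tfrac{\sigma^2}{T}\mathrm{tr}(A\Sigma)$, and each eigenvalue of $A\Sigma$ equals $\mu/(\mu+\lambda_E)\le r^2/(r^2+\lambda_E)$, so this is $\le \tfrac{\sigma^2K}{T}\cdot\tfrac{1}{1+\lambda_E/r^2}$. (ii) Using $\|\theta_0-\theta_n\|_2\le\beta$ with $A^2\Sigma\preceq\tfrac1{\lambda_E}A\Sigma$, the bias piece is $\le \lambda_E\beta^2\cdot\tfrac{1}{1+\lambda_E/r^2}$. (iii) Since $\xi_0=\Sigma^{-1}g_0$ with $g_0=\tfrac1{TL}\Bv^\top\!\sum_m w_m$ and $\Eb[g_0g_0^\top]=\tfrac{\sigma^2}{TL}\Sigma$, we get $\lambda_E^2\Eb\|A\xi_0\|_\Sigma^2=\tfrac{\lambda_E^2\sigma^2}{TL}\mathrm{tr}(A^2)\le\tfrac{\sigma^2K}{TL}$. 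Plugging in the prescribed $\lambda_E=\sigma^2K/(T\beta^2)$ makes $\lambda_E\beta^2=\sigma^2K/T$ and $\lambda_E/r^2=\sigma^2K/(Tr^2\beta^2)$, so (i) and (ii) are each $\le\tfrac{\sigma^2K}{T}\cdot\tfrac{1}{1+\sigma^2K/(Tr^2\beta^2)}$; grouping them (their cross term vanishes in expectation because $g_n$ is mean zero) and splitting against (iii) via the weighted inequality $\|x+y\|_\Sigma^2\le\tfrac32\|x\|_\Sigma^2+3\|y\|_\Sigma^2$ gives the stated bound $3\tfrac{\sigma^2K}{T}\tfrac{1}{1+\sigma^2K/(Tr^2\beta^2)}+6\tfrac{\sigma^2K}{TL}$.

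The main obstacle is the bookkeeping around $\widehat\theta_0$: because $w_0$ contains the summand $w_n$, the ``own'' noise $g_n$ and the ``parent'' noise $\xi_0$ are correlated, so one either tracks the cross-covariance $\Eb[g_ng_0^\top]=\tfrac{\sigma^2}{TL}\Sigma$ or, as above, is slightly lossy and keeps them in separate groups. The second delicate point is that the bias does not carry a $1/L$ on its own, so a naive triangle-inequality split would corrupt the leading constant; the fix is to observe that with the prescribed $\lambda_E$ the bias is controlled by the same $\tfrac{1}{1+\lambda_E/r^2}$ factor as the own-sample variance and must be absorbed into the first term rather than the $1/L$ term. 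Making the final constants come out exactly $3$ and $6$ is then only a matter of choosing the split weights, which I would not optimize further.
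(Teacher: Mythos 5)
Your proof is correct and reaches the stated bound (in fact with a slightly better constant, $3\sigma^2K/(TL)$ in place of $6\sigma^2K/(TL)$), but the execution differs from the paper's. The paper changes variables to $\psi_n=\theta_n-\widehat\theta_0$, so that the child problem becomes a standard ridge regression for $\psi_n$; it then introduces an oracle estimator $\widetilde\psi_n$ (the same ridge problem with the true $\theta_0$ substituted for $\widehat\theta_0$), shows via the closed forms and an eigendecomposition that $\|\widehat\psi_n-\widetilde\psi_n\|_\Sigma^2\le\|\widehat\theta_0-\theta_0\|_\Sigma^2$ (a contraction of the plug-in error), invokes the Hsu--Kakade--Zhang fixed-design ridge bound as a black box on $\widetilde\psi_n$ (which estimates $\theta_n-\theta_0$ with $\|\theta_n-\theta_0\|\le\beta$) and the OLS bound on $\widehat\theta_0$ with the $1/L$ noise reduction, and finishes with $\|a+b+c\|^2\le 3(\|a\|^2+\|b\|^2+\|c\|^2)$. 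You instead work directly from the stationarity condition, splitting $\widehat\theta_n-\theta_n$ into the three explicit terms $Ag_n$, $\lambda_E A(\theta_0-\theta_n)$, and $\lambda_E A(\widehat\theta_0-\theta_0)$, and bound each by trace/eigenvalue computations --- effectively re-deriving the ridge bias--variance bound inline (your terms (i) and (ii) are exactly the two pieces of the paper's Corollary, each equal to $\tfrac{\sigma^2K}{T}\tfrac{1}{1+\lambda_E/r^2}$ at the prescribed $\lambda_E$). The two decompositions are algebraically distinct (your parent-noise term is $\lambda_E A(\widehat\theta_0-\theta_0)$, whereas the paper's contraction term is $A\Sigma(\theta_0-\widehat\theta_0)$), but both are valid; yours is more self-contained and makes the bias/variance/plug-in structure explicit, while the paper's is shorter by leaning on the off-the-shelf ridge corollary. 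Your handling of the $g_n$--$w_0$ correlation by keeping the groups separate under a weighted triangle inequality is sound, as is your observation that the $x_1,x_2$ cross term vanishes in expectation.
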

Note that $\|\widehat{\theta}_n - \theta_n\|_\Sigma^2 = \|\Bv(\widehat{\theta}_n - \theta_n)\|^2$ equals the training mean squared error. The gains due to the regularization can be understood by considering the case of a small $\beta$. Note that a small $\beta$ implies that the children time-series have structural similarities which is common in hierarchical datasets. As $\beta \to 0$, the hierarchically regularized estimator approaches an error of $\Oc(\frac{\sigma^2K}{TL})$ which is $L$ times smaller when compared to the unregularized estimator. In fact, if $1/\beta^2 = \omega(T)$, then the numerator $1 + \frac{\sigma^2 K}{T r^2 \beta^2}$ in Eq.~\eqref{eqn:main_thm} is $\omega(1)$ resulting in $\Eb\|\widehat{\theta}_n - \theta_n\|^2_\Sigma = o(\frac{\sigma^2K}{T})$ which is asymptotically better than the unregularized estimate.

\section{Experiments}
We implemented our proposed model in Tensorflow~\citep{abadi2016tensorflow} and compared against multiple baselines on popular hierarchical time-series datasets.

\textbf{Datasets.} We experimented with three hierarchical forecasting datasets - Two retail forecasting datasets, M5~\citep{m5}
and Favorita~\citep{favorita};
and the Tourism~\citep{tourism}
dataset consisting of tourist count data. The history length and forecast horizon $(H, F)$ were set to (28, 7) for Favorita and M5, and (24, 4) for Tourism.
We divide each of the datasets into training, validation and test sets. More details about the datasets and splits can be found in Appendix~\ref{sec:training_params}.

\textbf{Baselines.} We compare our proposed approach \textsc{HiReD} with the following baseline models:
(i) \emph{RNN} - we use a seq-2-seq model shared across all the time series,
(ii) \emph{DeepGLO} \citep{sen2019think},
(iii) \emph{DCRNN} \citep{li2017diffusion}, a GNN based approach where we feed the hierarchy tree as the input graph,
(iv) \emph{Deep Factors (DF)} \citep{wang2019deep},
(v) \emph{HierE2E} \citep{rangapuram2021end} - the standard implementation produces probabilistic forecasts, we however adapt it to point forecasts by using their projection step on top of a DeepVAR seq-2-seq model, and (vi) \emph{$L_2$Emb} \citep{gleason2020forecasting}, which is an improvement over \citet{mishchenko2019self}.
For a fair comparison, we also make sure that all models have approximately the same number of parameters. We use code publicly released by the authors for DeepGLO\footnote{\url{https://github.com/rajatsen91/deepglo}} and DCRNN\footnote{\url{https://github.com/liyaguang/DCRNN/}}. We implemented our own version of DeepFactors, HierE2E, and $L_2$Emb for a fair comparison, since the official implementations either make rolling probabilistic forecasts, or use a different set of covariates. Additionally, we also compare with the recent \emph{ERM} \citep{ben2019regularized} reconciliation  method applied to the base forecasts from the RNN model, denoted as (vii) \emph{RNN+ERM}. It has been shown in \citep{ben2019regularized} to outperform many previous reconciliation techniques such as MinT \citep{wickramasuriya2019optimal}.

\textbf{Metrics.} We compare the accuracy of the various approaches with respect to two metrics:
(i) \emph{weighted absolute percentage error (WAPE)}, and
(ii) \emph{symmetric mean absolute percentage error (SMAPE)}. A description of these metrics can be found in Appendix~\ref{sec:acc_metrics}. We report the metrics on the test data, for each level of the hierarchy (with level 0 denoting the root) in Tables~\ref{tab:results1} and \ref{tab:results2}. As a measure of the aggregate performance across all the levels, we also report the mean of the metrics in all the levels of the hierarchy denoted by \emph{Mean}.

\textbf{Training.} We train our model using minibatch SGD with the Adam optimizer \citep{kingma2014adam}. We use learning rate decay and early stopping based on the Mean WAPE metric on the validation set. We perform 10 independent runs and report the average metrics on the test set. All our experiments were performed using a single Titan Xp GPU with 12GB of memory. Further training details and model hyperparameters can be found in Appendix~\ref{sec:training_params}. Our model hyper-parameters were tuned using the Mean WAPE metric on the validation set.

\subsection{Results}
Tables \ref{tab:results1} and \ref{tab:results2} shows the averaged test metrics for M5, Favorita, and Tourism datasets. We present only the \emph{Mean} metrics for the Tourism dataset due to lack of space. Complete results with confidence intervals for all the three datasets can be be found in Appendix~\ref{sec:full_results}.

We find that for all three datasets, our proposed model either yields the smallest error or close to the smallest error across most metrics and most levels. In particular, we find that our proposed method achieves the smallest errors in the mean column for all datasets in terms of WAPE and SMAPE, thus indicating good performance generally across all levels. We find that RNN+ERM in general, yields an improvement over the base RNN predictions for the higher levels closer to the root node (Levels 0 and 1), while, worsening at the lower levels. DCRNN, despite using the hierarchy as a graph also does not perform as well as our approach, especially in Tourism and M5 Datasets - possibly due to the fact that a GNN is not the most effective way to model the tree hierarchies. We notice that HierE2E performs reasonably well for the smaller Tourism while performing badly for the larger M5 and Favorita datasets - a possible explanation being that this is a VAR model that requires much more parameters to scale to large datasets. Therefore, for HierE2E we perform experiments with 50$\times$ more parameters for M5 and Favorita and report the results in Table~\ref{tab:full_results} in the appendix, showing that while the results improve, it still performs much worse than our model.
Overall, we find that our proposed method consistently works better or at par with the other baselines at all hierarchical levels.


\begin{table}[t]
\caption{The tables show the WAPE/SMAPE test metrics for the M5, Tourism, and Favorita datasets, averaged over 10 runs. We present only the Mean metrics for the Tourism dataset due to lack of space. A complete set of results including the standard deviations can be found in Appendix~\ref{sec:full_results}.
}
\label{tab:results1}
\shorttrue
\addtolength{\tabcolsep}{-2pt}
\centering
\tiny
\begin{tabular}{l|c|c|c|c|c}
\toprule
M5      & Level 0 & Level 1 & Level 2 & Level 3 & Mean \\ \midrule
\textsc{HiReD}         &
\B{0.048}{0.0011}/\B{0.048}{0.0011} &
\B{0.055}{0.0006}/\B{0.053}{0.0006} &
\B{0.072}{0.0007}/\B{0.077}{0.0006} &
\R{0.279}{0.0003}/\R{0.511}{0.0012} &
\B{0.113}{0.0005}/\B{0.172}{0.0006}\\[5pt]
RNN          &
\R{0.059}{0.002}/\R{0.059}{0.003} &
\R{0.083}{0.013}/\R{0.083}{0.011} &
\R{0.085}{0.002}/\R{0.098}{0.004} &
\R{0.282}{0.006}/\R{0.517}{0.007} &
\R{0.127}{0.005}/\R{0.189}{0.005}\\[5pt]
DF &
\R{0.055}{0.001}/\R{0.056}{0.001} &
\R{0.061}{0.001}/\R{0.060}{0.001} &
\R{0.076}{0.001}/\R{0.085}{0.002} &
\R{0.272}{0.000}/\B{0.501}{0.002} &
\R{0.116}{0.001}/\R{0.176}{0.001}\\[5pt]
DeepGLO      &
\R{0.077}{0.0003}/\R{0.081}{0.0004} &
\R{0.087}{0.0003}/\R{0.092}{0.0004} &
\R{0.099}{0.0003}/\R{0.113}{0.0003} &
\R{0.278}{0.0001}/\R{0.538}{0.0001} &
\R{0.135}{0.0003}/\R{0.206}{0.0003}\\[5pt]
DCRNN        &
\R{0.078}{0.006}/\R{0.079}{0.007} &
\R{0.096}{0.005}/\R{0.092}{0.004} &
\R{0.165}{0.003}/\R{0.193}{0.007} &
\R{0.282}{0.000}/\R{0.512}{0.000} &
\R{0.156}{0.002}/\R{0.219}{0.003}\\[5pt]
RNN+ERM          &
\R{0.052}{0.001}/\R{0.052}{0.001} &
\R{0.066}{0.001}/\R{0.071}{0.002} &
\R{0.084}{0.001}/\R{0.104}{0.002} &
\R{0.286}{0.002}/\R{0.520}{0.004} &
\R{0.122}{0.001}/\R{0.187}{0.001}\\[5pt]
$L_2$Emb &
\R{0.055}{0.0016}/\R{0.056}{0.001} &
\R{0.064}{0.0014}/\R{0.063}{0.001} &
\R{0.080}{0.0011}/\R{0.092}{0.001} &
\B{0.269}{0.0003}/\B{0.501}{0.003} &
\R{0.117}{0.0009}/\R{0.178}{0.001}\\[5pt]
Hier-E2E  &
\R{0.152}{0.002}/\R{0.160}{0.002} &
\R{0.152}{0.002}/\R{0.158}{0.002} &
\R{0.152}{0.002}/\R{0.181}{0.002} &
\R{0.396}{0.001}/\R{0.615}{0.002} &
\R{0.213}{0.002}/\R{0.278}{0.002}
\ifshort
\\
\else
\\[5pt]
Hier-E2E Large &
\B{0.047}{0.002}/\R{0.050}{0.003} & 
\R{0.057}{0.001}/\R{0.063}{0.001} & 
\B{0.067}{0.001}/\R{0.080}{0.001} & 
\R{0.347}{0.001}/\R{0.573}{0.001} & 
\R{0.130}{0.001}/\R{0.192}{0.001}\\
\fi
\bottomrule
\end{tabular}
\addtolength{\tabcolsep}{2pt}

\hfill
\begin{tabular}{l|c}
\toprule
Tourism  & Mean \\ \midrule
\textsc{HiReD}         & \B{0.186}{0.001}/\B{0.322}{0.002}\\[5pt]
RNN     & \R{0.211}{0.001}/\R{0.333}{0.002}\\[5pt]
DF 		& \R{0.204}{0.001}/\R{0.334}{0.003}\\[5pt]
DeepGLO & \R{0.199}{0.0001}/\R{0.346}{0.0001}\\[5pt]
DCRNN   & \R{0.281}{0.000}/\R{0.392}{0.001}\\[5pt]
RNN+ERM & \R{0.251}{0.005}/\R{0.417}{0.006}\\[5pt]
$L_2$Emb  & \R{0.215}{0.002}/\R{0.342}{0.003}\\[5pt]
Hier-E2E  & \R{0.208}{0.001}/\R{0.340}{0.002}\\\bottomrule
\end{tabular}
\end{table}

\begin{table}[]
\caption{The tables show the WAPE/SMAPE test metrics for the Favorita datasets, averaged over 10 runs. The standard deviations can be found in Appendix~\ref{sec:full_results}.
}
\label{tab:results2}
\shorttrue
\addtolength{\tabcolsep}{-2pt}
\centering
\tiny
\begin{tabular}{l|c|c|c|c|c}
\toprule
Favorita     & Level 0 & Level 1 & Level 2 & Level 3 & Mean \\ \midrule
\textsc{HiReD} &
\R{0.061}{0.002}/\B{0.061}{0.002} &
\B{0.094}{0.001}/\B{0.182}{0.002} &
\B{0.127}{0.001}/\B{0.267}{0.003} &
\R{0.210}{0.000}/\B{0.322}{0.004} &
\B{0.123}{0.001}/\B{0.208}{0.002}\\[5pt]
RNN &
\R{0.067}{0.004}/\R{0.068}{0.003} &
\R{0.114}{0.003}/\R{0.197}{0.004} &
\R{0.134}{0.002}/\R{0.290}{0.005} &
\B{0.203}{0.001}/\R{0.339}{0.005} &
\R{0.130}{0.002}/\R{0.223}{0.004}\\[5pt]
DF &
\R{0.064}{0.003}/\R{0.064}{0.004} &
\R{0.110}{0.002}/\R{0.194}{0.003} &
\R{0.135}{0.002}/\R{0.291}{0.007} &
\R{0.213}{0.001}/\R{0.343}{0.007} &
\R{0.130}{0.002}/\R{0.223}{0.004}\\[5pt]
DeepGLO &
\R{0.098}{0.001}/\R{0.088}{0.001} &
\R{0.126}{0.001}/\R{0.197}{0.001} &
\R{0.156}{0.001}/\R{0.338}{0.001} &
\R{0.226}{0.001}/\R{0.404}{0.001} &
\R{0.151}{0.001}/\R{0.256}{0.001}\\[5pt]
DCRNN &
\R{0.080}{0.004}/\R{0.080}{0.005} &
\R{0.120}{0.001}/\R{0.212}{0.002} &
\R{0.134}{0.000}/\R{0.328}{0.000} &
\B{0.204}{0.000}/\R{0.389}{0.000} &
\R{0.134}{0.001}/\R{0.252}{0.001}\\[5pt]
RNN+ERM &
\B{0.056}{0.002}/\B{0.058}{0.002} &
\R{0.103}{0.001}/\R{0.185}{0.003} &
\R{0.129}{0.001}/\R{0.283}{0.005} &
\R{0.220}{0.001}/\R{0.348}{0.005} &
\R{0.127}{0.001}/\R{0.219}{0.003}\\[5pt]
$L_2$Emb &
\R{0.070}{0.003}/\R{0.070}{0.003} &
\R{0.114}{0.002}/\R{0.199}{0.004} &
\R{0.136}{0.001}/\R{0.276}{0.006} &
\R{0.207}{0.001}/\B{0.321}{0.007} &
\R{0.132}{0.002}/\R{0.216}{0.004}\\[5pt]
Hier-E2E &
\R{0.120}{0.005}/\R{0.125}{0.006} &
\R{0.206}{0.003}/\R{0.334}{0.005} &
\R{0.247}{0.002}/\R{0.448}{0.006} &
\R{0.409}{0.007}/\R{0.573}{0.014} &
\R{0.245}{0.004}/\R{0.370}{0.007}
\ifshort
\\
\else
\\[5pt]
Hier-E2E Large &
\R{0.082}{0.002}/\R{0.077}{0.002} &
\R{0.168}{0.003}/\R{0.263}{0.010} &
\R{0.190}{0.002}/\R{0.360}{0.003} &
\R{0.314}{0.002}/\R{0.440}{0.001} &
\R{0.189}{0.002}/\R{0.285}{0.003}\\
\fi
\bottomrule
\end{tabular}
\addtolength{\tabcolsep}{2pt}

\end{table}

\textbf{Ablation study.} Next, we perform an ablation study of our proposed model to understand the effects of its various components, the results of which are presented in Table~\ref{tab:ablation}. We compare our proposed model, to the same model without any regularization (set $\lambda_E = 0$ in Eq~\eqref{eqn:train_loss}), and a model consisting of only TVAR. We find that both these components in our model are important, and result in improved accuracy in most metrics.

\begin{table}[b]
\caption{We report the test WAPE/SMAPE metrics for an ablation study on the M5 dataset, for each of the components in the HiReD model. We compare our model with two ablated variants: first, we remove the regularization ($\lambda_E=0$), and second, we remove the BD component (TVAR only).}
\label{tab:ablation}
\shorttrue
\addtolength{\tabcolsep}{-2pt}
\centering
\tiny
\begin{tabular}{l|c|c|c|c|c}
\toprule
M5 Abl    & Level 0 & Level 1 & Level 2 & Level 3 & Mean \\ \midrule
\textsc{HiReD}         &
\B{0.048}{0.001}/\B{0.048}{0.001} &
\B{0.055}{0.000}/\B{0.053}{0.000} &
\B{0.072}{0.000}/\B{0.077}{0.000} &
\B{0.279}{0.000}/\B{0.511}{0.001} &
\B{0.113}{0.000}/\B{0.172}{0.000}\\[5pt]
$\lambda_E = 0$     &
\R{0.054}{0.002}/\R{0.054}{0.002} &
\R{0.058}{0.001}/\R{0.056}{0.001} &
\R{0.074}{0.001}/\R{0.078}{0.001} &
\B{0.279}{0.000}/\R{0.513}{0.001} &
\R{0.116}{0.001}/\R{0.175}{0.001}\\[5pt]
TVAR only &
\R{0.050}{0.000}/\R{0.049}{0.000} &
\R{0.064}{0.000}/\R{0.065}{0.000} &
\R{0.084}{0.000}/\R{0.086}{0.000} &
\R{0.288}{0.000}/\R{0.520}{0.001} &
\R{0.122}{0.000}/\R{0.180}{0.000}\\
\bottomrule
\end{tabular}
\addtolength{\tabcolsep}{2pt}

\end{table}

\textbf{Coherence. } We also compare the coherence of our predictions to that of the RNN model and an ablated model with $\lambda_E = 0$. Specifically, for each node $p$ we measure the deviation of our forecast from $\cv\idx{p} = 1/\Lc(p) \sum_{i\in \Lc(p)} \widehat{\yv}\idx{i}$, the mean of the leaf node predictions of the corresponding sub-tree. Perfectly coherent predictions will have a zero deviation from this quantity. In Table~\ref{tab:coherency}, we report the WAPE metric between the predictions from our model $\widehat{\yv}$ and $\cv$, for each of the hierarchical levels. The leaf level predictions are trivially coherent. We find that our proposed model consistently produces more coherent predictions compared to both the models, indicating that our hierarchical regularization indeed encourages coherency in predictions, in addition to improving accuracy.

\textbf{Basis Visualization. } We visualize the basis generated by the BD model for the M5 validation set in Figure~\ref{fig:basis_plots} (left). We notice that the bases capture various global temporal patterns in the dataset. In particular, most of the bases have a period of 7, indicating that they represent weekly patterns. We also show the predictions made from the various components of our model at all hierarchical levels, in Figure~\ref{fig:basis_plots} (right). We notice that the predictions from the BD part closely resemble the general patterns of the true time series values, where as the AR model adds further adjustments to the predictions, including a constant bias, for most time series. For the leaf level (Level 3) predictions however, the final prediction is dominated by the AR model indicating that global temporal patterns may be less useful in this case.

\begin{figure}[t]
\begin{minipage}{0.53\textwidth}
    \centering
    \includegraphics[trim={0, 55pt, 0, 55pt},clip,height=0.76\textwidth]{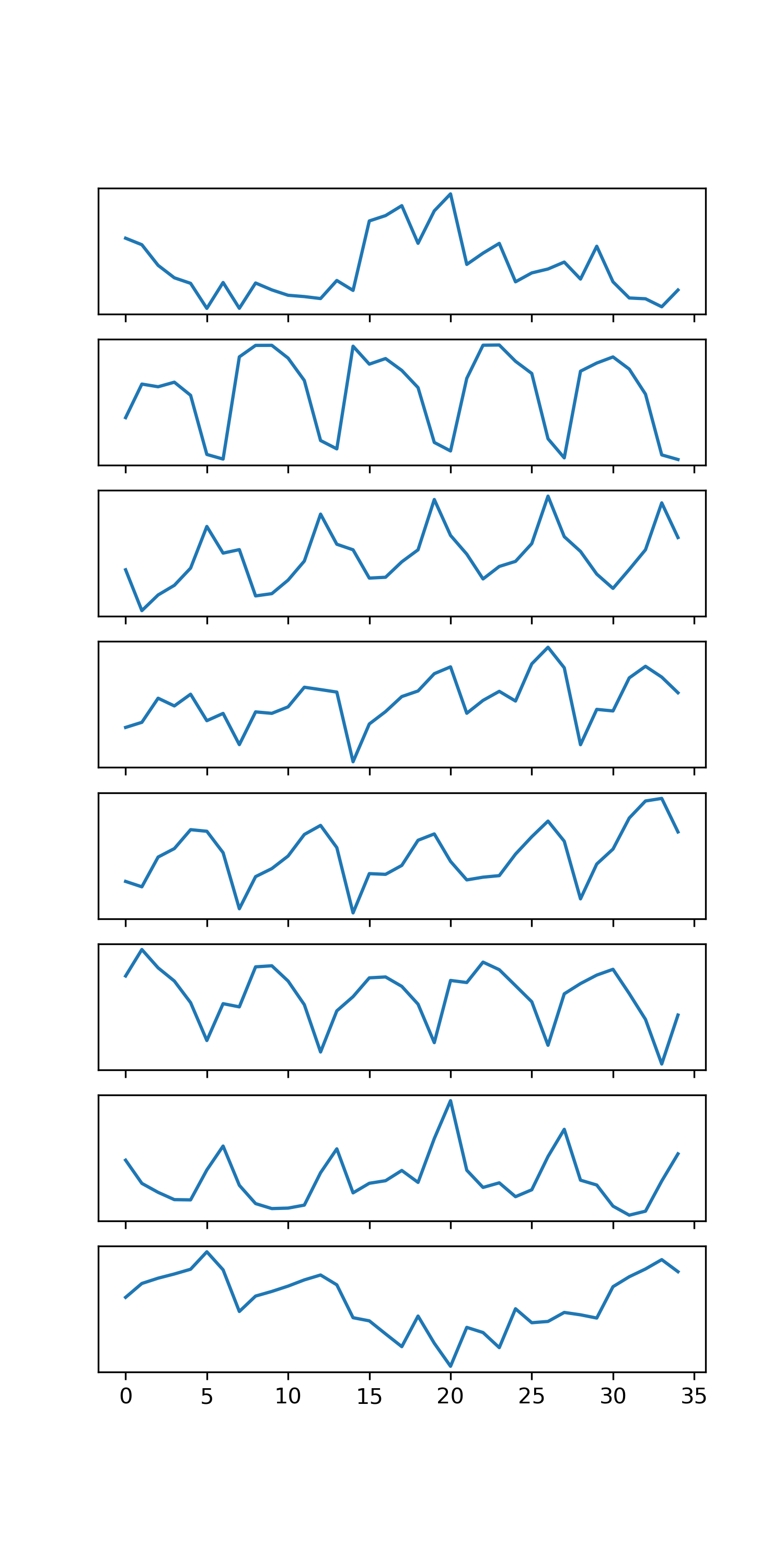}\hfill
    \includegraphics[height=0.76\textwidth]{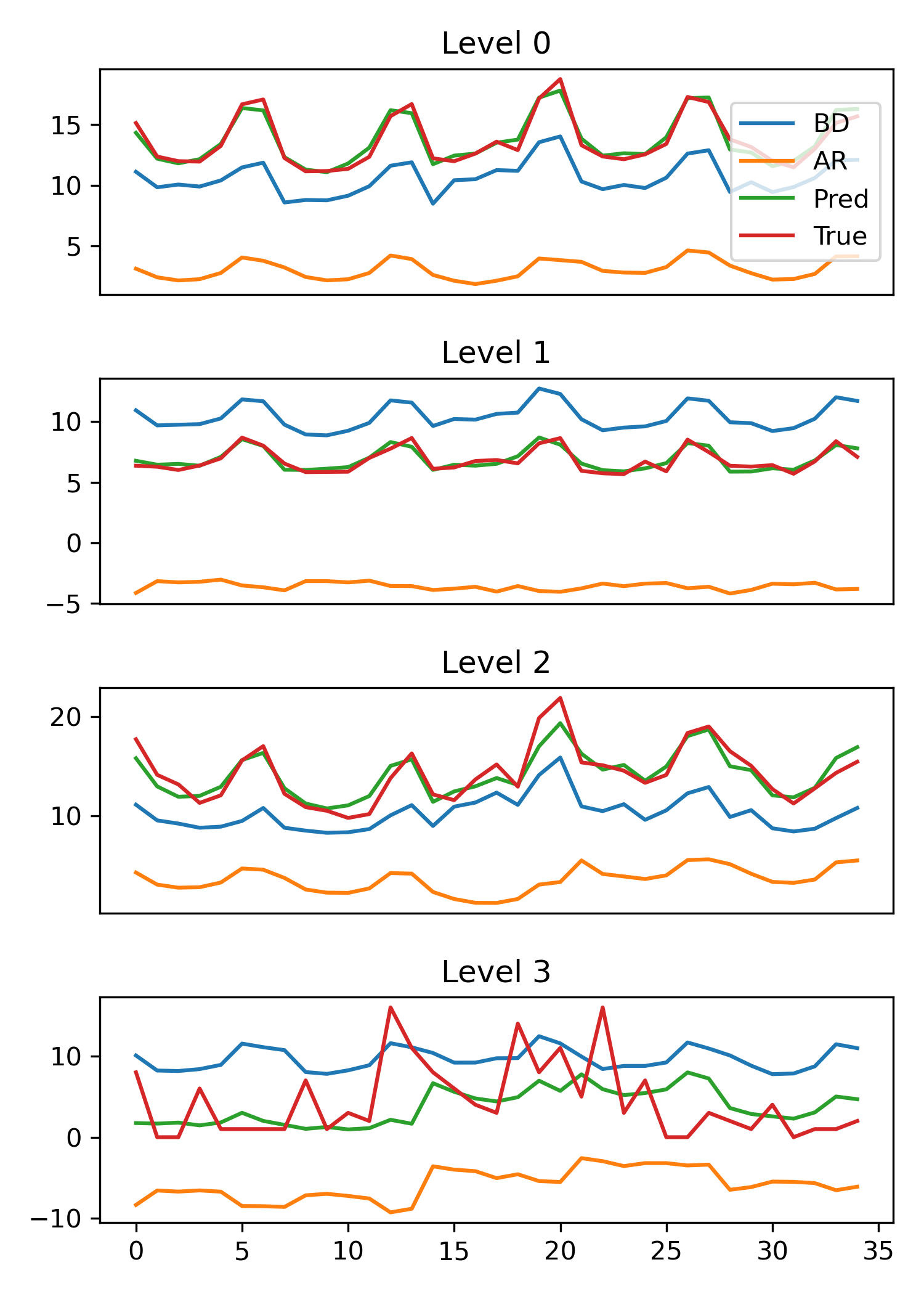}
    \captionof{figure}{Left: Plots of the basis generated on the validation set of the M5 dataset over 35 days. Right: We plot the true time series over the same time period, and compare it with the predicted time series, AR predictions and BD predictions.}
    \label{fig:basis_plots}
\end{minipage}\hfill
\begin{minipage}{0.45\textwidth}
    {\tiny
    \begin{tabular}{l|l|c|c|c|c}
    \toprule
                              &              & L0 & L1 & L2 & L3 \\\midrule
    \multirow{3}{*}{Favorita} & HiReD        & \textbf{0.004} & \textbf{0.004} & \textbf{0.003} & - \\
                              & $\lambda_E = 0$ & 0.012 & 0.013 & 0.010 & - \\
                              & RNN          & 0.043 & 0.044 & 0.042 & - \\\midrule
    \multirow{3}{*}{M5}       & HiReD        & \textbf{0.030} & \textbf{0.034} & \textbf{0.034} & - \\
                              & $\lambda_E = 0$ & 0.035 & 0.040 & 0.039 & - \\
                              & RNN          & 0.042 & 0.057 & 0.047 & - \\\midrule
    \multirow{3}{*}{Tourism}  & HiReD        & 0.092 & \textbf{0.079} & \textbf{0.066} & 0.060 \\
                              & $\lambda_E = 0$ & \textbf{0.085} & 0.082 & 0.067 & \textbf{0.059} \\
                              & RNN          & 0.097 & 0.089 & 0.082 & 0.083 \\ \bottomrule
    \end{tabular}}
    \caption{Coherency metric for all our datasets, at all hierarchical levels. Leaf node metrics are identically zero, and hence not reported in the table.}
    \label{tab:coherency}
\end{minipage}
\vspace{-1.5em}
\end{figure}

\section{Conclusion}
In this paper, we proposed a method for hierarchical time series forecasting, consisting of two components, the TVAR model, and the BD model. The TVAR model is coherent by design, whereas we regularize the BD model to impose approximate coherency. Our model is fully differentiable and is trainable via SGD, while also being scalable with respect to the number of nodes in the hierarchy. Furthermore, it also does not require any additional pre-processing steps.

We empirically evaluated our method on three benchmark datasets and showed that our model consistently improved over state of the art baselines for most levels of the hierarchically. We perform an ablation study to justify the important components of our model and also show empirically that our forecasts are more coherent than the RNN baseline. Lastly, we also visualize the learned basis and observe that they capture various global temporal patterns.


In this work, we aimed at maximizing the overall performance without emphasizing performance at individual hierarchical levels. For future work, we plan to treat this is as a multi-objective problem with the aim of understanding the performance tradeoffs at various levels of the hierarchy. 
Finally, we also plan to extend our current model to probabilistic forecasting.




\bibliography{ref}
\bibliographystyle{iclr2022_conference}


\clearpage
\appendix

\section{Theory}
\subsection{Support Recovery}
\label{sec:support_recovery}
\begin{lemma}
\label{lem:support_recovery}
Suppose $\Bv$ satisfies the lower eigenvalue condition (Assumption \ref{ass:low_eig} in Appendix~\ref{sec:lasso}) with parameter $C_{\min}$ and the mutual incoherence condition (Assumption \ref{ass:mut_inc} in Appendix~\ref{sec:lasso}) with parameter $\gamma$. Also assume that the columns of the basis pool $\bar{\Bv}$ are normalized so that $\max_{j\in S^c}\|\bar{\Bv}\idx{j}\|\le \sqrt{T}$, and the true parameter $\theta_0$ of the root satisfies
\begin{equation}
    \|\theta_0\|_\infty \ge \lambda_L
        \sbb{\infn{\Sigma^{-1}}_\infty + \frac{4\sigma}{\sqrt{LC_{\min}}}},
\end{equation}
where $\infn{A}_\infty = \max_i \sum_j |A_{ij}|$ denotes matrix operator $\ell_\infty$ norm, and $\Sigma = \Bv^T\Bv/T$ denotes the empirical covariance matrix. Then for $\lambda_L \ge \frac{2}{\gamma} \sqrt{\frac{2\sigma^2 \log d}{LT}}$, with probability least $1 - 4\exp(-c_1 T \lambda^2)$ (for some constant $c_1$), the support $\widehat{S} = \{i\ |\ |\widehat{\alpha}_0| > 0\}$ recovered from the Lasso solution (see Algorithm~\ref{alg:support_rec}) is equal to the true support $S$.
\end{lemma}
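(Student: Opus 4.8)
The plan is to reduce the claim to the standard primal--dual witness (PDW) analysis of the Lasso --- our Theorem~\ref{thm:lasso} (the support-recovery result of \citet{wainwright2009sharp}) --- instantiated on the single regression over the root time series that Algorithm~\ref{alg:support_rec} solves. First I would put the root observation in canonical form: by the data mean property~\eqref{eqn:data_mean_property} together with $\yv=\Bv\thetav+\wv$ and i.i.d.\ $\Nc(0,\sigma^2)$ leaf noise, we have $y_0=\tfrac1L\sum_{n\in\Lc(0)}y_n=\bar{\Bv}\alpha^\star+w_0$, where $\alpha^\star\in\Rb^d$ is supported on $S$ with $\alpha^\star_S=\theta_0$, and $w_0=\tfrac1L\sum_{n\in\Lc(0)}w_n\sim\Nc(\zero,\tfrac{\sigma^2}{L}\Iv)$. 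The only feature of the hierarchy used beyond the generic sparse-recovery setting is that this aggregation shrinks the effective noise variance from $\sigma^2$ to $\sigma^2/L$; every $1/L$ in the statement (inside the threshold on $\lambda_L$ and inside the lower bound on $\|\theta_0\|_\infty$) traces back to it.

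Next I would carry out the PDW construction: fix $\widehat\alpha_{S^c}=0$, take $\widehat\alpha_S$ to solve the oracle-restricted Lasso on the true-support columns $\Bv=\bar{\Bv}_S$, pick $\widehat z_S\in\partial\|\widehat\alpha_S\|_1$ from the associated stationarity condition, and solve the remaining stationarity equations for $\widehat z_{S^c}$. It is standard that if (i) strict dual feasibility $\|\widehat z_{S^c}\|_\infty<1$ holds and (ii) $\mathrm{sign}(\widehat\alpha_S)=\mathrm{sign}(\alpha^\star_S)$, then $\widehat\alpha$ is the unique Lasso solution and $\widehat S=S$. For (i), split $\widehat z_{S^c}$ into a deterministic piece bounded via the mutual-incoherence parameter (Assumption~\ref{ass:mut_inc}), which contributes at most $1-\gamma$, and a stochastic piece of the form $\tfrac{1}{\lambda_L T}\bar{\Bv}_{S^c}^\top(\Iv-\Bv(\Bv^\top\Bv)^{-1}\Bv^\top)w_0$ whose $j$-th coordinate is Gaussian with variance at most $\sigma^2\|\bar{\Bv}\idx{j}\|_2^2/(L T^2\lambda_L^2)\le \sigma^2/(L T\lambda_L^2)$ using $\max_{j\in S^c}\|\bar{\Bv}\idx{j}\|_2\le\sqrt T$; a maximal inequality over the at most $d$ off-support coordinates combined with $\lambda_L\ge\tfrac{2}{\gamma}\sqrt{\tfrac{2\sigma^2\log d}{LT}}$ pushes this piece below $\gamma$ with probability $1-2\exp(-cT\lambda_L^2)$. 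For (ii), control $\|\widehat\alpha_S-\alpha^\star_S\|_\infty$ through $\widehat\alpha_S-\alpha^\star_S=\Sigma^{-1}\big(\tfrac1T\Bv^\top w_0-\lambda_L\widehat z_S\big)$: the drift term is at most $\lambda_L\,\infn{\Sigma^{-1}}_\infty$ since $\|\widehat z_S\|_\infty\le1$, and the noise term is at most $\tfrac{4\sigma\lambda_L}{\sqrt{LC_{\min}}}$ on an event of probability $1-2\exp(-cT\lambda_L^2)$ using the lower-eigenvalue condition (Assumption~\ref{ass:low_eig}) on $\Sigma$; the assumed lower bound on $\|\theta_0\|_\infty$, read as the usual minimum-signal condition, then exceeds $\|\widehat\alpha_S-\alpha^\star_S\|_\infty$ and rules out sign errors. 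A union bound over the two events gives the stated failure probability.

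The main obstacle is bookkeeping rather than any new idea: one must carefully carry the $1/L$ variance reduction through every constant of Theorem~\ref{thm:lasso}, and account for the fact that we assume only an \emph{upper} bound $\max_{j\in S^c}\|\bar{\Bv}\idx{j}\|_2\le\sqrt T$ on the off-support columns (the on-support block being handled entirely through $C_{\min}$ and the incoherence constant), while matching the $\tfrac{1}{2T}$ scaling of the objective in Algorithm~\ref{alg:support_rec} with the normalization convention of that theorem. Once these are aligned, the lemma is exactly the conclusion of Wainwright's Lasso support-recovery theorem with $\sigma^2$ replaced by $\sigma^2/L$.
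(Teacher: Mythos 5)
Your proposal is correct and follows essentially the same route as the paper: rewrite the root-node observation as a standard sparse regression $y_0 = \bar{\Bv}\alpha + w_0$ with aggregated noise $w_0 \sim \Nc(\zero, \sigma^2\Iv/L)$, and then invoke the support-recovery result of \citet{wainwright2009sharp} (Theorem~\ref{thm:lasso}) with $\sigma^2$ replaced by $\sigma^2/L$. The paper's proof simply cites Theorem~\ref{thm:lasso} as a black box after this reduction, whereas you additionally sketch its primal--dual witness internals; the substance is the same.
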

\begin{proof}
We are given a pool of basis vectors $\bar{\Bv}$ from which the observed data is generated using a subset of $K$ columns which we have denoted by $\Bv$ in the text. We denote the correct subset of columns by $S$ and recover them from the observed data using basis pursuit - also known as the support recovery problem in the literature. Given the observed data and the pool of basis vectors $\bar{\Bv}$, we recover the support from the following regression problem corresponding to the root node time series.
\begin{equation}
    y_0 = \bar{\Bv} \alpha + w_0, \quad w_0 \sim \Nc(\zero, \sigma^2\Iv / L),
\end{equation}
where $\alpha$ is $K$-sparse with the non-zero indices at $S$, and the non-zero values equal $\theta_0$ - the true parameters of the root node. Here we have used the fact that the root node has a $1/L$ times smaller variance due to aggregation. The true support $S$ can be recovered from the observed data $y_0$, by solving the sparse regression problem (Lasso) given in Algorithm~\ref{alg:param_rec}. A number of standard Lasso assumptions are needed to ensure that the support is identifiable, and that the non-zero parameters are large enough to be estimated. Assuming that $\bar{\Bv}_S\ (=\Bv)$ and $\alpha$ satisfy all the assumptions of Theorem~\ref{thm:lasso}, the theorem ensures that the true support $S$ is recovered with high probability.
\end{proof}

\subsection{Proof of Theorem~\ref{thm:main_thm} - Error Bounds for Regularized Estimators}
\label{sec:main_thm}
For this proof, we assume that the support $S$ is recovered and the true basis functions $\Bv$ are known with high probability (see Section~\ref{sec:support_recovery}). We divide the proof into multiple steps.

\paragraph{Step I:} By Corollary~\ref{cor:e_ridge}, the OLS estimate $\widehat{\theta}_0$ (see Algorithm~\ref{alg:param_rec}) of parameters of the root node and the OLS estimate $\widetilde{\theta}_n$ (see Eq.~\eqref{eqn:unreg_child}) can be bounded as,
\begin{equation}
\label{eqn:root_bound}
    \Eb[\|\widehat{\theta}_0 - \theta_0\|_\Sigma^2] \le \frac{\sigma^2K}{TL}, \quad \Eb[\|\widetilde{\theta}_n - \theta_n\|_\Sigma^2] \le \frac{\sigma^2K}{T} \quad \forall n \in \Lc(0).
\end{equation}

\paragraph{Step II:} Next, using change of variables, we notice that the ridge regression loss for the child nodes (see Algorithm~\ref{alg:param_rec}) is equivalent to the following.
\begin{equation}
    \widehat{\psi}_n = \argmin_{\psi_n} \frac{1}{T} \|y_n - \Bv\widehat{\theta}_0 - \Bv\psi_n\|_2^2 + \lambda \|\psi_n\|_2^2\quad\forall n \in \Lc(0),
\end{equation}

where $\psi_n = \theta_n - \widehat{\theta}_0$. The final estimate for the child parameters can be written as a sum of the $\psi_n$ estimate and the root node estimate, $\widehat{\theta}_n = \widehat{\psi}_n + \widehat{\theta}_0$. We also consider a related problem that will help us in computing the errors bounds.
\begin{equation}
\label{eqn:assist}
    \widetilde{\psi}_n = \argmin_{\psi_n} \frac{1}{T} \|y_n - \Bv\theta_0 - \Bv\psi_n\|_2^2 + \lambda \|\psi_n\|_2^2\quad\forall n \in \Lc(0).
\end{equation}
Here we have replaced $\widehat{\theta}_0$ with the true value $\theta_0$. Note that this regression problem cannot be solved in practice since we do not have access to the true value of $\theta_0$. We will only use it to assist in the analysis. Now we will bound the difference between the estimates $\widehat{\psi}_n$ and $\widetilde{\psi}_n$. The closed form solution for ridge regression is well known in the literature.
\begin{eqnarray*}
    \widehat{\psi}_n & = & T^{-1}(\Sigma + \lambda\Iv)^{-1} \Bv^T (y_n - \Bv\widehat{\theta}_0)\\
    \widetilde{\psi}_n & = & T^{-1}(\Sigma + \lambda\Iv)^{-1} \Bv^T (y_n - \Bv\theta_0),
\end{eqnarray*}
where $\Sigma = \Bv^T \Bv / T$, as defined earlier. The norm of the difference of the estimates can be bounded as
\begin{eqnarray*}
    \widehat{\psi}_n - \widetilde{\psi}_n & = & T^{-1}(\Sigma + \lambda\Iv)^{-1} \Bv^T \Bv (\widetilde{\theta}_0 - \widehat{\theta}_0)\\
    \implies \|\widehat{\psi}_n - \widetilde{\psi}_n\|_\Sigma^2 & = & (\widetilde{\theta}_0 - \widehat{\theta}_0)^T \Sigma (\Sigma + \lambda\Iv)^{-1} \Sigma (\Sigma + \lambda\Iv)^{-1} \Sigma
    (\widetilde{\theta}_0 - \widehat{\theta}_0)\\
    & = & (\widetilde{\theta}_0 - \widehat{\theta}_0)^T \Sigma (\Sigma + \lambda\Iv)^{-1} \Sigma (\Sigma + \lambda\Iv)^{-1} \Sigma
    (\widetilde{\theta}_0 - \widehat{\theta}_0)\\
    & = & (\widetilde{\theta}_0 - \widehat{\theta}_0)^T V D\sbb{\frac{\lambda_i^3}{(\lambda_i + \lambda)^2}} V^T (\widetilde{\theta}_0 - \widehat{\theta}_0).
\end{eqnarray*}
Here we have used an eigen-decomposition of the symmetric sample covariance matrix as $\Sigma = V D[\lambda_i] V^T$. We use the notation $D[\lambda_i]$ to denote a diagonal matrix with values $\lambda_i$ on the diagonal. The above can be further upper bounded using the fact that $\lambda_i \le \lambda_i + \lambda$.
\begin{equation}
\label{eqn:psi_bound}
    \|\widehat{\psi}_n - \widetilde{\psi}_n\|_\Sigma^2 \le (\widetilde{\theta}_0 - \widehat{\theta}_0)^T V D[\lambda_i] V^T (\widetilde{\theta}_0 - \widehat{\theta}_0) = \|\widetilde{\theta}_0 - \widehat{\theta}_0\|_\Sigma^2.
\end{equation}

\paragraph{Step III:} Now we will bound the error on $\widetilde{\psi}_n$ and finally use it in the next step with triangle inequality to prove our result. Note that $y_n - \Bv\theta_0 = \Bv(\theta_n - \theta_0) + w_n$. Therefore, we can see from Eq.~\eqref{eqn:assist} that $\widetilde{\psi}_n$ is an estimate for $\theta_n - \theta_0$. Using the fact that $\|\theta_n - \theta_0\|_2 \le \beta$ and corollary~\ref{cor:e_ridge}, $\widetilde{\psi}_n$ can be bounded as,
\begin{equation}
\label{eqn:psi_til_bound}
    \Eb[\|\widetilde{\psi} - (\theta_n - \theta_0)\|_\Sigma^2] \le \frac{r^2 \beta^2 \sigma^2 K}{Tr^2 \beta^2 + \sigma^2 K}.
\end{equation}
Finally using triangle inequality, we bound the error of our estimate $\widehat{\theta}_n$.
\begin{eqnarray*}
    \|\widehat{\theta}_n - \theta_n\|_\Sigma^2 & = & \|\widehat{\psi}_n + \widehat{\theta}_0 - \theta_n\|_\Sigma^2\quad\text{(Using the decomposition from Step II)}.\\
    & \le & 3\bb{\|\widehat{\psi}_n - \widetilde{\psi}_n\|_\Sigma^2+ \|\widetilde{\psi}_n - (\theta_n - \theta_0)\|_\Sigma^2 + \|\widehat{\theta}_0 - \theta_0\|_\Sigma^2}\\
    & & \text{(Using triangle and Cauchy-Schwartz inequality)}\\
    & \le & 3\bb{\|\widetilde{\psi}_n - (\theta_n - \theta_0)\|_\Sigma^2 + 2\|\widehat{\theta}_0 - \theta_0\|_\Sigma^2}\quad \text{(Using Eq.~\eqref{eqn:psi_bound})}.
\end{eqnarray*}
Taking the expectation of the both sides, and using Eq.~\eqref{eqn:root_bound} and \eqref{eqn:psi_til_bound}, we get the desired result.
\begin{equation*}
    \Eb\|\widehat{\theta}_n - \theta_n\|^2_\Sigma \le 3\frac{r^2 \beta^2 \sigma^2 K}{Tr^2 \beta^2 + \sigma^2 K} + 6\frac{\sigma^2K}{TL}.
\end{equation*}


 
\subsection{Review of Sparse Linear Regression}
\label{sec:lasso}
We consider the following sparse recovery problem. We are given data $(\Xv, y) \in \Rb^{n \times d} \times \Rb^n$ following the observation model $y=\Xv\theta^* + w$, where $w \sim \Nc(\zero, \sigma^2 \Iv)$, and $\theta^*$ is supported in the indices indexed by a set $S$ ($S$-sparse). We estimate $\theta^*$ using the following Lagrangian Lasso program,

\begin{equation}
    \widehat{\theta} \in \argmin_{\theta\in \Rb^n} \cbb{\frac{1}{2n}\|y-\Xv\theta\|_2^2 + \lambda_n\|\theta\|_1}
\end{equation}
We consider the fixed design setting, where the matrix $\Xv$ is fixed and not sampled randomly. Following \citep{wainwright2009sharp}, we make the following assumptions required for recovery of the true support $S$ of $\theta^*$.

\begin{assumption}[Lower eigenvalue]
The smallest eigenvalue of the sample covariance sub-matrix indexed by $S$ is bounded below:
\begin{equation}
    \Lambda_{\min} \bb{\frac{\Xv_S^T \Xv_S}{n}} \ge C_{\min} > 0
\end{equation}
\label{ass:low_eig}
\end{assumption}

\begin{assumption}[Mutual incoherence]
There exists some $\gamma \in (0, 1]$ such that
\begin{equation}
    \infn{\Xv_{S^c}^T\Xv_S(\Xv_S^T\Xv_S)^{-1}}_\infty \le 1 - \gamma,
\end{equation}
where $\infn{A}_\infty = \max_i \sum_j |A_{ij}|$ denotes matrix operator $\ell_\infty$ norm.
\label{ass:mut_inc}
\end{assumption}

\begin{theorem}[Support Recovery, \citet{wainwright2009sharp}]
\label{thm:lasso}
Suppose the design matrix satisfies assumptions \ref{ass:low_eig} and \ref{ass:mut_inc}. Also assume that the design matrix has its $n$-dimensional columns normalized so that $\max_{j\in S^c} \|X_j\|_2 \le \sqrt{n}$. Then for $\lambda_n$ satisfying,
\begin{equation}
    \lambda_n \ge \frac{2}{\gamma} \sqrt{\frac{2\sigma^2 \log d}{n}},
\end{equation}
the Lasso solution $\widehat{\theta}$ satisfies the following properties with a probability of at least $1 - 4\exp(-c_1 n \lambda_n^2)$:
\begin{enumerate}
    \item The Lasso has a unique optimal solution $\widehat{\theta}$ with its support contained within the true support $S(\widehat{\theta}) \subseteq S(\theta^*)$ and satisfies the $\ell_\infty$ bound
    \begin{equation}
        \|\widehat{\theta}_S - \theta^*_S\|_\infty \le \underbrace{ \lambda_n
            \sbb{\infn{\bb{\frac{\Xv_S^T\Xv_S}{n}}^{-1}}_\infty + \frac{4\sigma}{\sqrt{C_{\min}}}}
        }_{g(\lambda_n)},
    \end{equation}
    \item If in addition, the minimum value of the regression vector $\theta^*$ is lower bounded by $g(\lambda_n)$, then it recovers the exact support.
\end{enumerate}
\end{theorem}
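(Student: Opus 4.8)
This is the classical Lasso support–recovery theorem of \citet{wainwright2009sharp}, so the plan is to reproduce its argument via the \emph{primal--dual witness} (PDW) construction. The idea is to build a candidate primal solution that is correctly supported together with a dual certificate, and then argue that this candidate is in fact \emph{the} unique Lasso optimum. Concretely, I would first fix $\widehat{\theta}_{S^c}=0$ and solve the restricted ``oracle'' program $\widehat{\theta}_S \in \argmin_{\theta_S} \frac{1}{2n}\|y-\Xv_S\theta_S\|_2^2 + \lambda_n\|\theta_S\|_1$, then pick a subgradient $\widehat{z}_S \in \partial\|\widehat{\theta}_S\|_1$ satisfying the $S$-block of the stationarity condition. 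The remaining dual coordinates $\widehat{z}_{S^c}$ are then forced by the full KKT system. The whole theorem reduces to verifying \emph{strict dual feasibility}, $\|\widehat{z}_{S^c}\|_\infty < 1$: a standard lemma then guarantees that $(\widehat{\theta},\widehat{z})$ is the unique optimum (uniqueness also using invertibility of $\Xv_S^T\Xv_S$ from Assumption~\ref{ass:low_eig}) and that $\mathrm{supp}(\widehat{\theta})\subseteq S$, giving the containment claimed in part~1.

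\textbf{Making the KKT system explicit.} Writing $y = \Xv_S\theta^*_S + w$ and $\widehat{\Delta}_S = \widehat{\theta}_S-\theta^*_S$, the $S$-block yields $\widehat{\Delta}_S = (\Xv_S^T\Xv_S)^{-1}(\Xv_S^T w - n\lambda_n \widehat{z}_S)$, which is well defined because Assumption~\ref{ass:low_eig} makes $\Xv_S^T\Xv_S$ invertible. Substituting into the $S^c$-block gives the decomposition
\begin{equation*}
\widehat{z}_{S^c} = \Xv_{S^c}^T\Xv_S(\Xv_S^T\Xv_S)^{-1}\widehat{z}_S + \frac{1}{n\lambda_n}\Xv_{S^c}^T\big(\Iv - \Xv_S(\Xv_S^T\Xv_S)^{-1}\Xv_S^T\big)w,
\end{equation*}
that is, a deterministic ``incoherence'' term plus a stochastic term. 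For the deterministic term, $\|\widehat{z}_S\|_\infty\le 1$ together with the mutual-incoherence Assumption~\ref{ass:mut_inc} bounds it in $\ell_\infty$ by $1-\gamma$.

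\textbf{The main obstacle.} The crux — and the step I expect to be hardest — is controlling the stochastic term uniformly over the up to $d$ coordinates of $S^c$. Each coordinate is a zero-mean Gaussian $V_j = \frac{1}{n\lambda_n}\langle \Xv_j, \Pi^\perp w\rangle$, where $\Pi^\perp = \Iv - \Xv_S(\Xv_S^T\Xv_S)^{-1}\Xv_S^T$ projects off the column span of $\Xv_S$; its variance is at most $\frac{\sigma^2\|\Xv_j\|_2^2}{n^2\lambda_n^2}\le \frac{\sigma^2}{n\lambda_n^2}$ by the column-normalization hypothesis $\max_{j\in S^c}\|\Xv_j\|_2\le\sqrt{n}$. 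A Gaussian tail bound plus a union bound over $S^c$ shows $\|V_{S^c}\|_\infty \le \gamma/2$ except with probability at most $2d\exp(-\gamma^2 n\lambda_n^2/(8\sigma^2))$; the prescribed choice $\lambda_n\ge \frac{2}{\gamma}\sqrt{2\sigma^2\log d/n}$ is exactly what forces the exponent to dominate $\log d$, collapsing this to a bound of the claimed form $4\exp(-c_1 n\lambda_n^2)$ once it is unioned with the event below. Adding the two terms gives $\|\widehat{z}_{S^c}\|_\infty\le 1-\gamma+\gamma/2 = 1-\gamma/2<1$, establishing strict dual feasibility and hence uniqueness and support containment.

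\textbf{The $\ell_\infty$ bound and exact recovery.} Finally I would prove part~1's error bound. From $\widehat{\Delta}_S = (\Xv_S^T\Xv_S)^{-1}\Xv_S^T w - \lambda_n(\Xv_S^T\Xv_S/n)^{-1}\widehat{z}_S$, split $\|\widehat{\Delta}_S\|_\infty$ into the subgradient part $\lambda_n\|(\Xv_S^T\Xv_S/n)^{-1}\widehat{z}_S\|_\infty \le \lambda_n\,\infn{(\Xv_S^T\Xv_S/n)^{-1}}_\infty$ and the noise part $\|(\Xv_S^T\Xv_S)^{-1}\Xv_S^T w\|_\infty$. The latter is a Gaussian vector with per-coordinate variance at most $\sigma^2/(nC_{\min})$ by Assumption~\ref{ass:low_eig}, so a max-of-Gaussians bound over the $|S|\le d$ coordinates, combined with the lower bound on $\lambda_n$, controls it by $\lambda_n\cdot\frac{4\sigma}{\sqrt{C_{\min}}}$ on the same high-probability event; summing the two contributions gives $\|\widehat{\theta}_S-\theta^*_S\|_\infty\le g(\lambda_n)$. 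Part~2 is then immediate: if $\min_{j\in S}|\theta^*_j|>g(\lambda_n)$, the $\ell_\infty$ bound forces $\mathrm{sign}(\widehat{\theta}_j)=\mathrm{sign}(\theta^*_j)\ne 0$ for every $j\in S$, so $S\subseteq\mathrm{supp}(\widehat{\theta})$, which together with the containment from part~1 yields exact support recovery.
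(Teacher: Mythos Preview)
The paper does not prove this theorem at all: it is stated in the appendix as a known result, explicitly attributed to \citet{wainwright2009sharp}, and is then simply invoked (in Lemma~\ref{lem:support_recovery}) as a black box. So there is no ``paper's own proof'' to compare against.

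That said, your proposal is a faithful and correct sketch of Wainwright's original primal--dual witness argument, which is exactly the proof the citation points to. The decomposition of $\widehat{z}_{S^c}$ into the incoherence term (controlled by Assumption~\ref{ass:mut_inc}) and the projected-noise term (controlled by column normalization and the Gaussian tail/union bound), followed by the $\ell_\infty$ error split into the subgradient and noise parts, is precisely the standard route. Nothing is missing; if anything, for the purposes of this paper you could simply cite the result as the authors do rather than reprove it.
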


\subsection{Review of Ridge Regression}
In this section we review the relevant background from \citep{hsu2012random} on fixed design ridge regression. As usual, we assume data $(\Xv, y) \in \Rb^{n \times d} \times \Rb^n$ following the observation model $y=\Xv\theta^* + w$, where $w \sim \Nc(\zero, \sigma^2 \Iv)$. Define the \emph{ridge estimator} $\widehat{\theta}$ as the minimizer of the $\ell_2$ regularized mean squared error,
\begin{equation}
    \widehat{\theta} \in \argmin_{\theta\in \Rb^n} \cbb{\frac{1}{n}\|y-\Xv\theta\|_2^2 + \lambda\|\theta\|_2^2}
\end{equation}

We denote the sample covariance matrix by $\Sigma = \Xv^T \Xv / n$. Then for any parameter $\theta$, the expected $\ell_2$ prediction error is given by, $\|\theta - \theta^*\|_\Sigma^2 = \|\Xv(\theta - \theta^*)\|_2^2/n$. We also assume the standard ridge regression setting of bounded $\|\theta^*\|_2 \le B$. We have the following proposition from \citet{hsu2012random} on expected error bounds for ridge regression.

\begin{proposition}[\citet{hsu2012random}]
For any regularization parameter $\lambda > 0$, the expected prediction loss can be upper bounded as
\begin{equation}
    \Eb[\|\widehat{\theta} - \theta^*\|_\Sigma^2] \le \sum_j \frac{\lambda_j}{(\lambda_j/\lambda + 1)^2}{\theta^*_j}^2 + \frac{\sigma^2}{n} \sum_j \bb{\frac{\lambda_j}{\lambda_j + \lambda}}^2,
\end{equation}
where $\lambda_i$ denote the eigenvalues of the empirical covariance matrix $\Sigma$.
\end{proposition}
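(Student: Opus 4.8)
The plan is to treat the theorem as a consequence of fixed-design least-squares and ridge-regression error bounds, after conditioning on the high-probability event (guaranteed by Lemma~\ref{lem:support_recovery}) that Basis Recovery returns the true $\Bv$, so that both stages of Algorithm~\ref{alg:param_rec} are genuine regressions onto the correct $K$-dimensional design. The first inequality is then immediate: $\widetilde{\theta}_n$ is the OLS estimator for $y_n = \Bv\theta_n + w_n$ with $w_n\sim\Nc(\zero,\sigma^2\Iv)$, so setting $\lambda=0$ and summing over the $K$ nonzero eigenvalues of $\Sigma$ in the ridge bias–variance bound of \citet{hsu2012random} kills the bias and leaves $\Eb\|\widetilde{\theta}_n-\theta_n\|_\Sigma^2\le\sigma^2 K/T$. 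Applying the same bound to the root node, whose aggregated noise has the smaller variance $\sigma^2/L$, records the auxiliary estimate $\Eb\|\widehat{\theta}_0-\theta_0\|_\Sigma^2\le\sigma^2 K/(TL)$ that I will need later.

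For the regularized leaf estimator I would first decouple it from $\widehat{\theta}_0$ via the change of variables $\psi_n = \theta_n-\widehat{\theta}_0$, which turns the child objective of Algorithm~\ref{alg:param_rec} into an ordinary ridge regression of the residual $y_n-\Bv\widehat{\theta}_0$ onto $\Bv$ with penalty $\lambda_E\|\psi_n\|_2^2$, and recovers $\widehat{\theta}_n = \widehat{\psi}_n+\widehat{\theta}_0$. The subtlety is that the regression offset is the \emph{estimate} $\widehat{\theta}_0$ rather than the truth $\theta_0$, so I introduce the (non-computable) oracle $\widetilde{\psi}_n$ defined identically but with $\theta_0$ in place of $\widehat{\theta}_0$. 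Using the closed form $\widehat{\psi}_n = T^{-1}(\Sigma+\lambda_E\Iv)^{-1}\Bv^T(y_n-\Bv\widehat{\theta}_0)$ and its oracle counterpart, their difference is $T^{-1}(\Sigma+\lambda_E\Iv)^{-1}\Sigma(\theta_0-\widehat{\theta}_0)$; expanding $\|\cdot\|_\Sigma^2$ in the eigenbasis $\Sigma = VD[\lambda_i]V^T$ gives the per-eigenvalue multiplier $\lambda_i^3/(\lambda_i+\lambda_E)^2\le\lambda_i$, whence $\|\widehat{\psi}_n-\widetilde{\psi}_n\|_\Sigma^2\le\|\widehat{\theta}_0-\theta_0\|_\Sigma^2$. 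This step — showing that the plug-in error inflates the oracle error by no more than the already-controlled root error — is the crux of the argument.

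It then remains to bound the oracle term. Since $y_n-\Bv\theta_0 = \Bv(\theta_n-\theta_0)+w_n$, the estimator $\widetilde{\psi}_n$ is a plain ridge estimate of the low-norm target $\theta_n-\theta_0$ with $\|\theta_n-\theta_0\|_2\le\beta$; feeding $\|\theta^*\|_2\le\beta$ and the trace bound $\mathrm{tr}(\Sigma)\le r^2$ (from $\|\Bv_i\|_2\le r$) into the ridge bias–variance bound and choosing $\lambda_E=\sigma^2 K/(T\beta^2)$ to balance the two contributions yields $\Eb\|\widetilde{\psi}_n-(\theta_n-\theta_0)\|_\Sigma^2\le r^2\beta^2\sigma^2 K/(Tr^2\beta^2+\sigma^2 K)$, exactly the shrinkage gain that a small $\beta$ should buy. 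Finally I would decompose $\widehat{\theta}_n-\theta_n = (\widehat{\psi}_n-\widetilde{\psi}_n)+(\widetilde{\psi}_n-(\theta_n-\theta_0))+(\widehat{\theta}_0-\theta_0)$, apply $\|a+b+c\|_\Sigma^2\le 3(\|a\|_\Sigma^2+\|b\|_\Sigma^2+\|c\|_\Sigma^2)$, use the Step-II bound to fold the first term into the third, and take expectations, giving $3\,r^2\beta^2\sigma^2 K/(Tr^2\beta^2+\sigma^2 K)+6\sigma^2 K/(TL)$; rewriting the first summand as $3\tfrac{\sigma^2 K}{T}\bb{1+\tfrac{\sigma^2 K}{Tr^2\beta^2}}^{-1}$ matches the claimed form. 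The two places needing care are the plug-in propagation bound and verifying that $\lambda_E=\sigma^2 K/(T\beta^2)$ is indeed the balancing choice; everything else is bookkeeping with the cited regression bounds.
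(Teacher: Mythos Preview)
Your proposal does not address the stated Proposition. The Proposition is the bias--variance decomposition of the fixed-design ridge estimator, quoted from \citet{hsu2012random}, bounding $\Eb\|\widehat{\theta}-\theta^*\|_\Sigma^2$ in terms of the eigenvalues $\lambda_j$ of $\Sigma$ and the coordinates $\theta^*_j$ of the truth. The paper does not prove this statement; it imports it as a known result and then specializes it (via $\lambda_j\le\tr{\Sigma}\le r^2$ and $\|\theta^*\|_2\le B$) to obtain Corollary~\ref{cor:e_ridge}.

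What you have written is a proof sketch of Theorem~\ref{thm:main_thm}, not of the Proposition. Your argument in fact \emph{invokes} the Proposition as a black box in two places---once with $\lambda=0$ to get the OLS error $\sigma^2 K/T$, and once with $\lambda_E=\sigma^2 K/(T\beta^2)$ to bound the oracle $\widetilde{\psi}_n$---so the proposal assumes the very statement it is supposed to establish. A proof of the Proposition itself would expand the ridge closed form $\widehat{\theta}-\theta^*=(\Sigma+\lambda\Iv)^{-1}\tfrac{1}{n}\Xv^T w-\lambda(\Sigma+\lambda\Iv)^{-1}\theta^*$, diagonalize $\Sigma=VD[\lambda_j]V^T$, and read off the squared bias and variance in each eigen-direction; none of that appears in your write-up. (If the intended target was actually Theorem~\ref{thm:main_thm}, then your sketch matches the paper's proof in Appendix~\ref{sec:main_thm} step for step: same change of variables $\psi_n$, same oracle $\widetilde{\psi}_n$, same contraction $\lambda_i^3/(\lambda_i+\lambda_E)^2\le\lambda_i$, and same three-term decomposition with the constant $3$.)
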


Using the fact that $\lambda_j \le \tr{\Sigma}$, and $x/(x + c)$ is increasing in $x$ for $x \ge 0$, the above bound can be simplified as,
\begin{eqnarray*}
    \Eb[\|\widehat{\theta} - \theta^*\|_\Sigma^2] & \le & \frac{\tr{\Sigma}}{(\tr{\Sigma}/\lambda + 1)^2}\|\theta^*\|^2 + \frac{\sigma^2d}{n} \bb{\frac{\tr{\Sigma}}{\tr{\Sigma} + \lambda}}^2\\
    & \le & \frac{\tr{\Sigma}B^2 \lambda^2 + \tr{\Sigma}^2\sigma^2 d/n}{(\tr{\Sigma} + \lambda)^2}
\end{eqnarray*}
Assuming that the covariate vectors $\Xv_i$ are norm bounded as $\|\Xv_i\|_2 \le r$, and using the fact that $\tr{\Sigma} \le r^2$, gives us the following corollary.

\begin{corollary}
\label{cor:e_ridge}
When choosing $\lambda = \frac{\sigma^2d}{nB^2}$, the prediction loss can be upper bounded as,
\begin{equation}
    \Eb[\|\widehat{\theta} - \theta^*\|_\Sigma^2] \le \frac{r^2 B^2 \sigma^2 d}{nr^2 B^2 + \sigma^2 d}.
\end{equation}
The usual ordinary least squares bound of $\frac{\sigma^2d}{n}$ can be derived when considering the limit $B \to \infty$, corresponding to $\lambda = 0$.
\end{corollary}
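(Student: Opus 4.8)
Both bounds reduce to the fixed-design ridge-regression guarantee of Corollary~\ref{cor:e_ridge}, so the plan is to express each estimator as a (possibly limiting) ridge estimator and read off the constant. The two structural facts I will exploit are: (i) the root observation $y_0 = \Bv\theta_0 + w_0$ has aggregated noise $w_0 \sim \Nc(\zero, (\sigma^2/L)\Iv)$, so the root is estimated a factor $L$ more accurately than a leaf; and (ii) after a change of variables the regularized child program becomes an ordinary ridge regression whose effective target $\theta_n - \theta_0$ has norm at most $\beta$, which is exactly the regime where Corollary~\ref{cor:e_ridge} with $B=\beta$ gives a bound below the OLS value $\sigma^2K/T$.

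\textbf{The easy bound.} The unregularized estimator $\widetilde{\theta}_n$ is pure OLS on a leaf, i.e. the $B\to\infty$ (equivalently $\lambda\to 0$) limit of ridge with design $\Bv$, dimension $d=K$, sample size $n=T$, and noise variance $\sigma^2$; Corollary~\ref{cor:e_ridge} in that limit gives $\Eb\|\widetilde{\theta}_n - \theta_n\|_\Sigma^2 \le \sigma^2K/T$, the first claim. The same limit applied to the root, but with noise variance $\sigma^2/L$, yields $\Eb\|\widehat{\theta}_0 - \theta_0\|_\Sigma^2 \le \sigma^2K/(TL)$, which I reuse below.

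\textbf{Reducing the child program to ridge and decoupling the random center.} For the regularized estimate I first substitute $\psi_n = \theta_n - \widehat{\theta}_0$, turning the penalty $\lambda_E\|\widehat{\theta}_0 - \theta_n\|_2^2$ into the standard ridge penalty $\lambda_E\|\psi_n\|_2^2$ and giving $\widehat{\theta}_n = \widehat{\psi}_n + \widehat{\theta}_0$. The obstacle is that the regularization center $\widehat{\theta}_0$ is itself random and statistically coupled to the child regression, so Corollary~\ref{cor:e_ridge} does not apply directly. To decouple, I introduce an idealized estimator $\widetilde{\psi}_n$ obtained by replacing $\widehat{\theta}_0$ with the true $\theta_0$ in the objective. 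Writing both in closed form gives $\widehat{\psi}_n - \widetilde{\psi}_n = (\Sigma+\lambda_E\Iv)^{-1}\Sigma(\theta_0 - \widehat{\theta}_0)$, and diagonalizing $\Sigma = V\,D[\lambda_i]\,V^\top$ reduces $\|\widehat{\psi}_n - \widetilde{\psi}_n\|_\Sigma^2$ to a weighted quadratic in $\theta_0 - \widehat{\theta}_0$ with weights $\lambda_i^3/(\lambda_i+\lambda_E)^2 \le \lambda_i$; hence $\|\widehat{\psi}_n - \widetilde{\psi}_n\|_\Sigma^2 \le \|\widehat{\theta}_0 - \theta_0\|_\Sigma^2$, already controlled by the root bound. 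This decoupling is the crux of the argument.

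\textbf{Bounding the idealized estimator and assembling.} Since $y_n - \Bv\theta_0 = \Bv(\theta_n - \theta_0) + w_n$, the idealized $\widetilde{\psi}_n$ is exactly a ridge estimator of the shifted parameter $\theta_n - \theta_0$, whose norm is at most $\beta$. Applying Corollary~\ref{cor:e_ridge} with $B=\beta$, $d=K$, $n=T$, and the prescribed $\lambda_E = \sigma^2K/(T\beta^2)$ gives $\Eb\|\widetilde{\psi}_n - (\theta_n-\theta_0)\|_\Sigma^2 \le r^2\beta^2\sigma^2K/(Tr^2\beta^2+\sigma^2K)$. Finally I write $\widehat{\theta}_n - \theta_n = (\widehat{\psi}_n - \widetilde{\psi}_n) + (\widetilde{\psi}_n - (\theta_n-\theta_0)) + (\widehat{\theta}_0 - \theta_0)$ and apply the triangle inequality with $(a+b+c)^2\le 3(a^2+b^2+c^2)$ in the $\Sigma$-seminorm; using the decoupling bound to fold the first term into a second copy of the root error, taking expectations, and inserting the three pieces produces $\Eb\|\widehat{\theta}_n-\theta_n\|_\Sigma^2 \le 3\,r^2\beta^2\sigma^2K/(Tr^2\beta^2+\sigma^2K) + 6\sigma^2K/(TL)$. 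Rewriting $r^2\beta^2\sigma^2K/(Tr^2\beta^2+\sigma^2K) = (\sigma^2K/T)\,(1+\sigma^2K/(Tr^2\beta^2))^{-1}$ yields the stated form.
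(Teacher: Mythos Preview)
Your argument is a correct and clean proof of Theorem~\ref{thm:main_thm}, and in fact it mirrors the paper's own three-step proof of that theorem (Appendix~\ref{sec:main_thm}) almost exactly: OLS bounds for root and leaf, the change of variables $\psi_n = \theta_n - \widehat{\theta}_0$, the decoupling estimate $\|\widehat{\psi}_n - \widetilde{\psi}_n\|_\Sigma^2 \le \|\widehat{\theta}_0 - \theta_0\|_\Sigma^2$ via eigendecomposition, and the three-term triangle inequality with the factor $3$. However, the statement you were asked to prove is Corollary~\ref{cor:e_ridge}, which is a purely non-hierarchical ridge regression bound for a single design and target $\theta^*$ with $\|\theta^*\|_2 \le B$. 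Throughout your write-up you \emph{invoke} Corollary~\ref{cor:e_ridge} as an input lemma rather than establishing it; nothing in your proposal derives the inequality $\Eb\|\widehat{\theta}-\theta^*\|_\Sigma^2 \le r^2 B^2\sigma^2 d/(nr^2 B^2 + \sigma^2 d)$.

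The paper's actual derivation of Corollary~\ref{cor:e_ridge} appears in the text immediately preceding its statement and is short: starting from the Hsu--Kakade--Zhang fixed-design bias--variance bound $\sum_j \lambda_j(\lambda_j/\lambda+1)^{-2}(\theta^*_j)^2 + (\sigma^2/n)\sum_j(\lambda_j/(\lambda_j+\lambda))^2$, one uses $\lambda_j \le \mathrm{tr}(\Sigma)$ together with the monotonicity of $x \mapsto x/(x+c)$ to collapse both sums into $\bigl(\mathrm{tr}(\Sigma)B^2\lambda^2 + \mathrm{tr}(\Sigma)^2\sigma^2 d/n\bigr)/(\mathrm{tr}(\Sigma)+\lambda)^2$, then bounds $\mathrm{tr}(\Sigma)\le r^2$ from the row-norm assumption and substitutes $\lambda = \sigma^2 d/(nB^2)$. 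The OLS bound $\sigma^2 d/n$ drops out as the $B\to\infty$ limit. None of these steps appear in your proposal.
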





\section{Further Experimental Details}

\subsection{Accuracy Metrics}
\label{sec:acc_metrics}
In this section we define the evaluation metrics used in this paper. Denote the true values by $\yv$ and the predicted values by $\widehat{\yv}$, both $n$-dimensional vectors.
\begin{enumerate}
    \item Symmetric mean absolute percent error SMAPE $= \frac{2}{n}\sum_i \frac{|\widehat{\yv}_i - \yv_i|}{|\yv_i| + |\widehat{\yv}_i|}$.
    \item Weighted absolute percentage error WAPE $=\frac{\sum_i |\widehat{\yv}_i - \yv_i|}{\sum_i |\yv_i|}$.
\end{enumerate}


\subsection{Dataset Details}
\label{sec:dataset_details}
We use three publicly available benchmark datasets for our experiments.
\begin{enumerate}[leftmargin=15pt,nolistsep,itemsep=4pt]
    \item The M5 dataset\footnote{\url{https://www.kaggle.com/c/m5-forecasting-accuracy/}} consists of time series data of product sales from 10 Walmart stores in three US states. The data consists of two different hierarchies: the product hierarchy and store location hierarchy. For simplicity, in our experiments we use only the product hierarchy consisting of 3k nodes and 1.8k time steps. The validation scores are computed using the predictions from time steps 1843 to 1877, and test scores on steps 1878 to 1913.
    \item The Favorita dataset\footnote{\url{https://www.kaggle.com/c/favorita-grocery-sales-forecasting/}} is a similar dataset, consisting of time series data from Corporaci\'on Favorita, a South-American grocery store chain. As above, we use the product hierarchy, consisting of 4.5k nodes and 1.7k time steps. The validation scores are computed using the predictions from time steps 1618 to 1652, and test scores on steps 1653 to 1687.
    \item The Australian Tourism dataset\footnote{\url{https://robjhyndman.com/publications/mint/}} consists of monthly domestic tourist count data in Australia across 7 states which are sub-divided into regions, sub-regions, and visit-type. The data consists of around 500 nodes and 230 time steps. The validation scores are computed using the predictions from time steps 122 to 156, and test scores on steps 157 to 192.
\end{enumerate}
For the three datasets, all the time-series (corresponding to both leaf and higher-level nodes) of the hierarchy that we used are present in the training data.

\subsection{Training Details}
\label{sec:training_params}

All models were trained via SGD using the Adam optimizer, on the datasets satisfying the data mean property \eqref{eqn:data_mean_property}. Furthermore, the data was standardized to mean zero and unit variance. The datasets were split into train, val, and test, the sizes of which are given in the Table~\ref{tab:train_params}. We used early stopping using the Mean WAPE score on the validation set, with a patience of 10 for all models. We tuned the model hyper-parameters using the same metric. The various model hyper-parameters are given in Table~\ref{tab:train_params}.

All our experiments were implemented in Tensorflow 2, and run on a Titan Xp GPU with 12GB of memory. The computing server we used, had 256GB of memory, and 32 CPU cores, however, our code did not seem to use more than 10GB of memory and 4 CPU cores.

\begin{table}[t]
\centering
\caption{Final model hyperparameters for various datasets tuned using the Mean WAPE metric on the validation set.}
\label{tab:train_params}
\begin{tabular}{l|c|c|c}
\toprule
Model hyperparameters                       & M5       & Favorita & Tourism  \\\midrule
LSTM hidden dim                             & 42       & 24       & 14       \\
Embedding dim $K$                           & 8        & 8        & 6        \\
NMF rank $R$                                & 12       & 4        & 6         \\
Multi-Horizon decoder hidden dim            & 24       & 16       & 12        \\
Embedding regularization $\lambda_E$        & 3.4e-6   & 4.644e-4 & 7.2498e-8     \\
History length $H$ and forecast horizon $F$ & (28, 7)  & (28, 7)  & (24, 4)   \\
No. of rolling val/test windows             & 5        & 5        & 3        \\
Initial learning rate                       & 0.004     & 0.002     & 0.07      \\
Decay rate and decay interval               & (0.5, 6) & (0.5, 6) & (0.5, 6) \\
Early stopping patience                     & 10       & 10       & 10       \\
Training epochs                             & 40       & 40       & 40       \\
Batch size                                  & 512      & 512      & 512      \\
Total \#params                               & 80k      & 80k      & 8k       \\\bottomrule
\end{tabular}
\end{table}

\subsection{More Results}
\label{sec:full_results}

Table~\ref{tab:full_results} show the test metrics averaged over 10 independent runs on the three datasets along with the standard deviations.

\begin{table}[]
\shortfalse

\caption{WAPE/SMAPE test metrics for all the three datasets, averaged over 10 runs. The standard deviations are shown in the parenthesis. We bold the smallest mean in each column and anything that comes within two standard deviations.}
\label{tab:full_results}

~\\

~\\
\addtolength{\tabcolsep}{-5pt}
\centering
\tiny
\begin{tabular}{l|c|c|c|c|c|c}
\toprule
Tourism     & Level 0 & Level 1 & Level 2 & Level 3 & Level 4 & Mean \\ \midrule
\textsc{HiReD}         &
\B{0.059}{0.001}/\B{0.061}{0.001} &
\B{0.125}{0.001}/\R{0.162}{0.003} &
\B{0.172}{0.001}/\R{0.225}{0.002} &
\B{0.229}{0.001}/\R{0.376}{0.004} &
\B{0.347}{0.001}/\B{0.786}{0.007} &
\B{0.186}{0.001}/\B{0.322}{0.002}\\[5pt]
RNN          &
\R{0.110}{0.001}/\R{0.106}{0.001} &
\R{0.148}{0.001}/\R{0.164}{0.002} &
\R{0.188}{0.001}/\R{0.231}{0.001} &
\R{0.240}{0.000}/\R{0.385}{0.006} &
\R{0.369}{0.001}/\B{0.782}{0.012} &
\R{0.211}{0.001}/\R{0.333}{0.002} \\[5pt]
DF &
\R{0.097}{0.003}/\R{0.096}{0.002} &
\R{0.141}{0.002}/\R{0.170}{0.002} &
\R{0.187}{0.001}/\R{0.240}{0.002} &
\R{0.241}{0.001}/\R{0.380}{0.002} &
\R{0.355}{0.000}/\B{0.783}{0.014} &
\R{0.204}{0.001}/\R{0.334}{0.003} \\[5pt]
DeepGLO      &
\R{0.089}{0.0002}/\R{0.079}{0.0002} &
\B{0.126}{0.0001}/\B{0.158}{0.0001} &
\R{0.179}{0.0001}/\B{0.218}{0.0001} &
\R{0.234}{0.0001}/\B{0.372}{0.0001} &
\R{0.364}{0.0001}/\R{0.900}{0.0002} &
\R{0.199}{0.0001}/\R{0.346}{0.0001}\\[5pt]
DCRNN        &
\R{0.187}{0.003}/\R{0.171}{0.003} &
\R{0.231}{0.002}/\R{0.248}{0.003} &
\R{0.258}{0.001}/\R{0.279}{0.002} &
\R{0.293}{0.001}/\R{0.398}{0.001} &
\R{0.434}{0.000}/\R{0.865}{0.000} &
\R{0.281}{0.000}/\R{0.392}{0.001}\\[5pt]
RNN+ERM          &
\R{0.078}{0.005}/\R{0.079}{0.005} &
\R{0.155}{0.003}/\R{0.206}{0.006} &
\R{0.225}{0.004}/\R{0.291}{0.006} &
\R{0.307}{0.006}/\R{0.498}{0.008} &
\R{0.488}{0.009}/\R{1.013}{0.010} &
\R{0.251}{0.005}/\R{0.417}{0.006}\\[5pt]
$L_2$Emb  &
\R{0.114}{0.007}/\R{0.115}{0.007} &
\R{0.153}{0.002}/\R{0.180}{0.004} &
\R{0.192}{0.002}/\R{0.244}{0.002} &
\R{0.245}{0.001}/\R{0.385}{0.002} &
\R{0.372}{0.002}/\B{0.789}{0.010} &
\R{0.215}{0.002}/\R{0.342}{0.003}\\[5pt]
Hier-E2E  &
\R{0.110}{0.002}/\R{0.113}{0.002} &
\R{0.143}{0.002}/\R{0.161}{0.003} &
\R{0.187}{0.002}/\R{0.232}{0.003} &
\R{0.240}{0.001}/\R{0.371}{0.004} &
\R{0.358}{0.001}/\R{0.824}{0.003} &
\R{0.208}{0.001}/\R{0.340}{0.002}\\
\bottomrule
\end{tabular}
\addtolength{\tabcolsep}{5pt}

\end{table}

\end{document}